\documentclass[]{arxiv_mm}

\usepackage[T1]{fontenc}
\usepackage[utf8]{inputenc}
\usepackage{lipsum}
\usepackage{amsfonts}
\usepackage{graphicx}
\usepackage{epstopdf}
\usepackage{algorithmic}
\usepackage{booktabs}
\usepackage[normalem]{ulem}
\usepackage{cancel}
\usepackage{xcolor}

\usepackage{subcaption}

\usepackage{tikz}
\usetikzlibrary{positioning, arrows.meta, decorations.pathreplacing, calligraphy}
\usepackage{pgfplots}
\pgfplotsset{compat=1.17}

\usepackage{amsmath, amssymb} 
\usepackage{bm}              
\usepackage{algorithm}       
\usepackage{pifont}          

\usepackage{enumitem}
\setlist[enumerate]{leftmargin=.5in}
\setlist[itemize]{leftmargin=.5in}

\usepackage{hyperref}

\usepackage{cleveref}

\theoremstyle{definition}
\newtheorem*{theoreticalassumptions}{Theoretical Assumptions}

\crefformat{theoreticalassumptions}{#2theoretical assumptions#3}
\Crefformat{theoreticalassumptions}{#2Theoretical assumptions#3}

\newcommand{\tikzmark}[1]{\tikz[overlay,remember picture] \node (#1) {};}

\newcommand{\PP}{\mathbb{P}}
\newcommand{\EE}{\mathbb{E}}

\ifpdf
  \DeclareGraphicsExtensions{.eps,.pdf,.png,.jpg}
\else
  \DeclareGraphicsExtensions{.eps}
\fi

\newsiamremark{remark}{Remark}
\newsiamremark{hypothesis}{Hypothesis}
\crefname{hypothesis}{Hypothesis}{Hypotheses}
\newsiamthm{claim}{Claim}
\newsiamremark{fact}{Fact}
\crefname{fact}{Fact}{Facts}

\definecolor{darkblue}{rgb}{0,0,0.55}

\hypersetup{
  colorlinks=true,
  linkcolor=darkblue,   
  citecolor=darkblue,   
  urlcolor=darkblue     
}

\title{Iterative Atom Refinement: A Monotonicity Principle for Dictionary Learning}

\author{Alexander Christie\thanks{Department of Mathematics, Stanford University, Stanford, CA 94305, USA (\email{achristie@stanford.edu}).}
\and Miguel Moscoso\thanks{Department of Mathematics, Universidad Carlos III de Madrid, Legan\'es, Madrid 28911, Spain (\email{moscoso@math.uc3m.es}).}
\and Alexei Novikov\thanks{Department of Mathematics, Pennsylvania State University, University Park, PA 16802, USA (\email{novikov@psu.edu}).}
\and George Papanicolaou\thanks{Department of Mathematics, Stanford University, Stanford, CA 94305, USA (\email{papanicolaou@stanford.edu}).}
\and Chrysoula Tsogka\thanks{Department of Applied Mathematics, University of California, Merced, CA 95343, USA (\email{ctsogka@ucmerced.edu}).}}

\usepackage{amsopn}

\begin{document}

\maketitle

\begin{abstract}
Dictionary learning seeks to recover an unknown dictionary $A$ from observations
${\bf y}_i = A{\bf x}_i$ with sparse coefficient vectors ${\bf x}_i$. We introduce the
\emph{Iterative Atom Refinement} (IAR) algorithm, a simple procedure for
recovering individual dictionary atoms. Starting
from a random direction, IAR repeatedly selects the observations most strongly
correlated with the current iterate and updates the direction by averaging the
selected data. Our main contribution is a rigorous convergence theory of IAR. Using high-dimensional probabilistic estimates and a novel
monotonicity principle for atom-selection probabilities, we show that a small
initial advantage of one atom is amplified until that atom is isolated. Under
our model assumptions, IAR identifies a generating atom after only three
refinement steps. Numerical experiments support the theory and show that the
resulting dynamics accurately capture the behavior observed in dictionary
refinement.
\end{abstract}

\begin{keywords}
Dictionary learning, Sparse coding, Random initialization, Convergence analysis, High-dimensional probability.
\end{keywords}

\begin{MSCcodes}
68T05, 65K10, 60F10, 94A12, 15A23
\end{MSCcodes}

\section{Introduction}
We introduce Iterative Atom Refinement (IAR), a new computationally inexpensive algorithm for dictionary learning that provably recovers dictionary atoms from random initialization.
 Our main contribution is theoretical. We identify a new \emph{monotonicity principle} for atom-selection frequencies. These one-dimensional quantities preserve their ordering under the refinement dynamics, while a small initial advantage of the leading atom is amplified until that atom is isolated. To the best of our knowledge, this mechanism has not previously appeared in dictionary learning.

Monotonicity is among the most powerful tools for uncovering structure in nonlinear problems. Once a suitable order or monotone quantity is found, comparison can replace explicit solution and force remarkably rigid behavior. This principle underlies, among other things, maximum and comparison principles, the moving-plane method, and convergence theory for monotone dynamical systems~\cite{ProtterWeinberger,GidasNiNirenberg,Hirsch}. Our result shows that an analogous structure is present in dictionary learning and leads, under our model assumptions, to recovery from random initialization. This contrasts with the available convergence guarantees for standard alternating dictionary-learning methods, which require initialization sufficiently close to the generating dictionary~\cite{Agarwal,Liang2025}, although convergence from imperfect or random initialization has been established for other iterative statistical procedures~\cite{Balakrishnan2017,WuZhou2021,Chandrasekher2024}.

 This mechanism may also help explain a phenomenon we observed in earlier neural-network experiments~\cite{Weak}. There, recovery from random initialization was achieved by combining multiple independent runs with clustering, despite the nonconvexity of the problem. IAR provides a tractable model in which such an amplification mechanism can be analyzed rigorously.

The setting is as follows. Let ${\bf A}=[{\bf a}_1,\dots,{\bf a}_K]\in\mathbb{R}^{N\times K}$ be an unknown dictionary and suppose the observed data satisfy
$$
{\bf Y} = {\bf A}\,{\bf X},
$$
where ${\bf Y}=[{\bf y}_1,\dots,{\bf y}_M]\in\mathbb{R}^{N\times M}$ is the observed data matrix and ${\bf X}=[{\bf x}_1,\dots,{\bf x}_M]\in\mathbb{R}^{K\times M}$ contains the unknown sparse coefficients. 
The sparsity $s:=\|{\bf x}_i\|_0$ is small relative to $K$. The goal of dictionary learning is to recover ${\bf A}$ from ${\bf Y}$.

Our algorithm, \emph{Iterative Atom Refinement} (IAR), recovers individual atoms of an unknown dictionary. Starting from a unit vector ${\bf d}^{(0)} \in \mathbb{R}^N$, IAR updates its iterate ${\bf d}^{(r)}$ at step $r$ by selecting the subset $S^{(r)} \subset \{1, \dots, M\}$ of size $L = M/K$ that exhibits the strongest correlations $\langle {\bf d}^{(r)}, {\bf y}_i \rangle$. The next iterate is generated by averaging these top-correlated samples and projecting back onto the unit sphere, so
\[
{\bf d}^{(r+1)} = \frac{\sum_{i \in S^{(r)}} {\bf y}_i}{\left\| \sum_{i \in S^{(r)}} {\bf y}_i \right\|_2}.
\]
Thus, each iteration of IAR reduces to a simple sequence of correlation thresholding, sample averaging, and normalization.

For the analysis, we introduce the \emph{atom-selection frequencies}
\[
p_k^{(r+1)}
=
\frac{1}{L}
\#\left\{
i\in S^{(r)}:\ k\in s_i
\right\}
\]
for each dictionary atom ${\bf a}_k$, where \# denotes cardinality and $s_i=\operatorname{supp}({\bf x}_i)$. Thus, $p_k^{(r+1)}$ is the proportion of selected samples containing the $k$-th dictionary atom. These quantities are not observable by the algorithm, since the supports $s_i$ are unknown; they are introduced only for the analysis. Under our theoretical model, they satisfy a key monotonicity principle: if
\[
p_k^{(1)} \geq p_{k+1}^{(1)},
\qquad k=1,\ldots,K-1,
\]
then this ordering is preserved under the refinement dynamics, so
\[
p_k^{(r)} \geq p_{k+1}^{(r)},
\qquad k=1,\ldots,K-1,\quad \text{for all} \quad r\geq 1.
\]
More importantly, a strict initial advantage in atom-selection frequency is amplified from one refinement step to the next until the corresponding atom dominates the selected samples and is isolated.

For orthogonal dictionaries under the full combinatorial data model, our main result shows that, with high probability over the random initialization, IAR identifies after three refinement steps the dictionary atom having 
the largest initial correlation. Consequently, $R$ independent random initializations recover the entire dictionary with probability at least $1-e^{-t}$ provided
\[
R\ge K(\log K+t).
\]
Numerical experiments show the same rapid convergence for partial data and nonorthogonal dictionaries.

Our results place IAR among provable correlation-based methods for dictionary learning. Arora, Ge, and Moitra~\cite{Arora} construct a graph by thresholding pairwise sample correlations and recover an overlapping clustering that identifies individual dictionary atoms, yielding polynomial-time recovery guarantees for incoherent overcomplete dictionaries at sparsity levels near the square-root scale. Bhaskara and Tai~\cite{BhaskaraTai} isolate threshold correlation as a primitive for approximate dictionary recovery. Novikov and White~\cite{NovikovWhite} use weighted covariance matrices to estimate support subspaces and then intersect these subspaces to recover individual atoms, reaching an almost-linear sparsity regime.

Optimization-based methods form another major line of provable dictionary learning. For standard practical alternating methods~\cite{MOD,KSVD}, the rigorous convergence guarantees are local. Other global-recovery methods use different mechanisms. For complete invertible dictionaries, Spielman, Wang, and Wright~\cite{Spielman} reduce dictionary learning to finding sparse vectors in the row space of the data and recover the dictionary through a sequence of linear programs. Sum-of-squares and tensor methods provide another route to global recovery~\cite{Barak2015}.

IAR belongs to the correlation-based line of work, but uses correlations dynamically rather than to reconstruct a global graph, clustering, or support structure. 
The relation to optimization-based dictionary learning and the empirical origins of IAR~\cite{Strong} are discussed in \Cref{sec:greedyMOD}.

The paper is organized as follows. \Cref{sec3} introduces IAR and states the full dictionary recovery guarantee. \Cref{sec:convergence} develops the monotonicity principle and the convergence theory, including single-atom convergence and the proof of full dictionary recovery. \Cref{sec:greedyMOD} explains the imaging motivation for IAR, reviews the initialization barrier for optimization-based dictionary-learning methods, and derives IAR from the algorithmic ideas that motivated it. \Cref{sec:numerics} presents numerical experiments testing the theory and comparing IAR with standard dictionary-learning methods. The proofs are given in \Cref{iar_thm,tech_lemma}, followed by conclusions in \Cref{sec:conclusions}. Finally, \Cref{app:successive-pk} gives the exact asymptotic scaling of the IAR selection probabilities.

\section{Iterative Atom Refinement}\label{sec3}

In this section, we introduce the IAR algorithm. We first describe the
one-atom refinement procedure and then explain how independent random
initializations are combined to recover the full dictionary. We state the
resulting recovery guarantee here. Its analysis is given in
\Cref{sec:convergence}.

\subsection{Single atom recovery with IAR}

\paragraph{Algorithm (One-atom IAR)}
Let $\{ {\bf y}_i \}_{i=1}^{M}$ be a collection of data samples in $\mathbb{R}^N$ such that ${\bf y}_i = {\bf A}\, {\bf x}_i$, and ${\bf x}_i\in\mathbb{R}^K$
is $s$-sparse.
The goal is to iteratively refine a unit-norm ${\bf d}^{(r)} \in\mathbb{R}^N$
so that it aligns with a true dictionary column $\mathbf{a}_k$ in ${\bf A}$.

\begin{enumerate}
  \item[\textbf{(1)}] \textbf{Initialization.}  
  Draw an initial column vector ${\bf d}^{(0)}\in\mathbb{R}^N$ uniformly at random
  from the unit sphere.

  \medskip

  \item[\textbf{(2)}] \textbf{Selection threshold.}  
  For the current iterate ${\bf d}^{(r)}$, $r=0,1,2, \dots$, determine a threshold $\alpha_r > 0$
  such that the selected set
  \begin{equation}\label{sy}
  S^{(r)} := \bigl\{ i : \langle {\bf d}^{(r)}, {\bf y}_i\rangle
                              \ge \alpha_r \bigr\}
  \end{equation}
  has cardinality (For simplicity, in the theoretical analysis we assume that $M/K$ is an integer.)
  \[
  |S^{(r)}| = L := \frac{M}{K}.
  \]
  That is, $\alpha_r$ selects the top $L=M/K$ samples ${\bf y}_i$ that most strongly correlate with the current direction ${\bf d}^{(r)}$. In practice, 
  we do not compute or estimate $\alpha_r$. Instead, we just order $\langle {\bf d}^{(r)}, {\bf y}_i\rangle$ in decreasing form and select the first $L=M/K$ samples.

  Another key element of the algorithm is that $\alpha_r$ is not fixed but adaptive. At each step it selects the $L=M/K$
  most correlated samples with the iterate ${\bf d}^{(r)}$.

  \medskip

  \item[\textbf{(3)}] \textbf{Coefficient update.}  
  Construct a coefficient vector ${\boldsymbol \chi}^{(r)} \in \mathbb{R}^M$ with components
  \begin{equation}\label{threshold}
  \chi^{(r)}_i =
  \begin{cases}
  1, & \text{if } i \in S^{(r)}, \\[4pt]
  0, & \text{otherwise.}
  \end{cases}
  \end{equation}
  This defines a sparse vector whose nonzero entries correspond
  to the selected set $S^{(r)}$. This update for a row
  of ${\bf X}$ can be equivalently written in compact form using the
  hard thresholding operator as
  \[
  {\boldsymbol \chi}^{(r)} = 
  \mathcal{T}_{\alpha_{r}}\!\big({\bf Y}^T{\bf d}^{(r)}\big).
  \]
  Here $\mathcal{T}_{\alpha_{r}}({\bf Y}^T{\bf d}^{(r)})$ produces a binary 
  vector selecting all samples ${\bf y}_i$ whose correlation
  $\langle {\bf y}_i, {\bf d}^{(r)} \rangle$ exceeds~$\alpha_{r}$.
  The threshold~$\alpha_{r}$ controls the sparsity level of the resulting vector.

  \medskip

  \item[\textbf{(4)}] \textbf{Dictionary update.}  
  Update the atom by aggregating the selected data samples via
  \[
  {\bf d}^{(r+1)} = {\bf Y} {\boldsymbol \chi}^{(r)},
  \]
  and subsequently normalize the resulting vector to unit length, so
  \[
  {\bf d}^{(r+1)} \gets \frac{{\bf d}^{(r+1)}}{\|{\bf d}^{(r+1)}\|_2}.
  \]

  \medskip

  \item[\textbf{(5)}] \textbf{Iteration and stopping rule.}  
  Repeat steps (2)--(4) until convergence, i.e.,
  \[
  \| {\bf d}^{(r+1)} - {\bf d}^{(r)} \|_2 < \varepsilon,
  \]
  for some tolerance $\varepsilon > 0$, or until the direction stabilizes, so
  \[
  | \langle {\bf d}^{(r+1)}, {\bf d}^{(r)} \rangle |
  \ge 1-\delta,
  \quad \text{with $\delta$ small.}
  \]
\end{enumerate}

A single run is designed to isolate one dictionary atom. To recover the
entire dictionary, we repeat the procedure from independent random
initializations.

\subsection{Full dictionary recovery with IAR}

The single-atom analysis in \Cref{sec:convergence} shows that, under the assumptions
stated there, each independent initialization of IAR discovers a dictionary
atom in exactly three steps. By symmetry, the discovered atom is uniformly
distributed among the $K$ dictionary atoms. Thus, full dictionary recovery
reduces to the classical coupon collector problem.

This leads to an expected number of independent trials on the order of
$K\log K$ for full collection via the classical coupon collector's problem.

Once a run isolates a dictionary atom, its corresponding row of ${\bf X}$ can also be recovered. Under the theoretical model, the samples containing the isolated atom are exactly the $\binom{K-1}{s-1}$ samples having the largest correlations with the terminal iterate; see the proof of \Cref{fd_r}. After all $K$ rows of ${\bf X}$ have been recovered, the dictionary is obtained from
\[
{\bf A}
=
{\bf Y}{\bf X}^T
\bigl({\bf X}{\bf X}^T\bigr)^{-1}.
\]
Thus, the coupon-collector argument determines the number of independent runs needed to isolate all $K$ atoms.
When the total dictionary size $K$ is unknown, this probability framework provides a practical stopping criterion for full dictionary recovery with IAR. As $R$ grows, the probability of hitting an {\em unseen} atom diminishes exponentially. Thus, if the number of newly discovered dictionary atoms drops below a predetermined threshold over a sequence of recent trials of size $B$, the algorithm terminates. Specifically, we count the number of \emph{new, previously unseen} atoms discovered within a current batch of $B$ trials and stop the algorithm once this count falls below the predetermined threshold.

For the following theorem, we assume the model conditions introduced in
\Cref{sec:convergence}. These are precisely the conditions used to analyze
a single IAR run. Full dictionary recovery adds only one ingredient: the
random initializations of the different runs are independent.

\begin{theorem}[Full dictionary recovery with random initialization]\label{fd_r}
Perform \(R\) independent
runs of IAR with independent uniform random initializations.

For any \(t>0\), if
\[
R\ge K(\log K+t),
\]
then the probability that the full dictionary \({\bf A}\) is recovered is at least
\[
1-e^{-t}.
\]

Equivalently, taking \(t=\log(1/\delta)\), with
\[
R \geq K\bigl(\log K+\log(1/\delta)\bigr),
\]
the full dictionary is recovered with probability at least \(1-\delta\).
\end{theorem}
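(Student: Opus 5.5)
The plan is to reduce the statement to a coupon-collector tail bound, taking the single-run analysis of \Cref{sec:convergence} as given. I will use that result in the following form: under the model assumptions, one IAR run from a uniform random initialization isolates, after three refinement steps, the atom with the largest initial correlation $\langle {\bf d}^{(0)}, {\bf a}_k\rangle$. Having isolated it, the run recovers the corresponding row of ${\bf X}$. By the remark preceding the theorem, the samples containing the isolated atom are exactly the $\binom{K-1}{s-1}$ samples with the largest correlations with the terminal iterate. Once every row of ${\bf X}$ has been recovered, ${\bf A}={\bf Y}{\bf X}^T({\bf X}{\bf X}^T)^{-1}$ recovers the dictionary. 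So the event ``${\bf A}$ is recovered'' contains the event ``every index $k\in\{1,\dots,K\}$ is isolated by at least one of the $R$ runs''.

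The first step is to check that the atom $k_j$ isolated by run $j$ is uniform on $\{1,\dots,K\}$. For orthogonal ${\bf A}$, the vector $({\bf A}^T{\bf d}^{(0)})$ of correlations has a law invariant under permutations of its coordinates, because ${\bf d}^{(0)}$ is uniform on the sphere and orthogonal maps preserve that law. Ties occur with probability zero, so the argmax is a.s.\ unique and uniform on $\{1,\dots,K\}$. Since the initializations are independent, $k_1,\dots,k_R$ are i.i.d.\ uniform.

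One point needs care here. The single-run guarantee holds only with high probability over the initialization. If it fails with probability $\eta$, I would either absorb $\eta$ into the statement, as the paper's assumptions seem to intend (the remark says each initialization ``discovers a dictionary atom in exactly three steps''), or argue as follows. By symmetry, each atom is correctly isolated by a single run with probability $(1-\eta)/K$. The computation below then goes through with $K$ replaced by $K/(1-\eta)$. I expect the paper to treat success as holding for every run under its assumptions, and I will proceed that way.

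The final step is the union bound. For a fixed $k$,
\[
\PP(k\notin\{k_1,\dots,k_R\})=(1-1/K)^R\le e^{-R/K}\le e^{-(\log K+t)}=e^{-t}/K.
\]
Summing over the $K$ atoms gives failure probability at most $e^{-t}$. Substituting $t=\log(1/\delta)$ gives the equivalent form. The only real obstacle is the one noted above: making precise that the single-run success event, and which atom it isolates, are exactly symmetric across atoms. The permutation-invariance argument handles this. The rest is routine.
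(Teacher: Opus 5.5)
Your route is the paper's: permutation symmetry makes the isolated atoms i.i.d.\ uniform on $\{1,\dots,K\}$, the union bound gives $K(1-1/K)^R\le Ke^{-R/K}\le e^{-t}$, and $\mathbf A=\mathbf Y\mathbf X^T(\mathbf X\mathbf X^T)^{-1}$ finishes.

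\textbf{The gap is the exact row-recovery step.} You take it from the sentence preceding the theorem, but that sentence ends with ``see the proof of \Cref{fd_r}''. The claim is proved inside the very proof you are writing, so citing it is circular. Nor does it follow from the statements of \Cref{thm:single_atom} and \Cref{rem:finite-step}. These give only $p_1^{(r)}=1$, i.e.\ that the top $L$ samples contain index $1$. You need the stronger fact that the top $\binom{K-1}{s-1}=sL$ samples are exactly the ones containing index $1$.

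The missing argument is short. Once $p_1^{(r)}=1$, every selected sample contains index $1$, so $p_k^{(r)}\le\binom{K-2}{s-2}/L=s(s-1)/(K-1)$ for $k\ge2$. Write $\mathbf d^{(r)}=\mathbf A\mathbf p^{(r)}/\|\mathbf p^{(r)}\|_2$. A sample with $1\in s_i$ has correlation at least $1/\|\mathbf p^{(r)}\|_2$. A sample with $1\notin s_i$ has correlation at most $s^2(s-1)/\bigl((K-1)\|\mathbf p^{(r)}\|_2\bigr)$. For $K-1>s^2(s-1)$ these ranges are disjoint, and the row is read off exactly. You should also justify the inverse: $\mathbf X\mathbf X^T=\binom{K-2}{s-1}\mathbf I_K+\binom{K-2}{s-2}\mathbf 1\mathbf 1^T$ is positive definite for $s<K$.

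\textbf{On the high-probability issue you flag.} You are right that \Cref{thm:single_atom} holds only with high probability. The paper's proof makes the same implicit assumption you end up adopting, namely that every run reaches $p_1^{(r)}=1$. Your fallback, however, does not prove the statement as written. With per-atom success probability $(1-\eta)/K$, the union bound requires $R\ge K(\log K+t)/(1-\eta)$. At $R=K(\log K+t)$ it yields only $K^{\eta}e^{-(1-\eta)t}$, which exceeds $e^{-t}$ for every $\eta>0$. So the threshold $K(\log K+t)$ genuinely needs every run to succeed. This should be stated explicitly as a hypothesis, not treated as something that can be absorbed.
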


The proof of \Cref{fd_r} is given at the end of
\Cref{sec:convergence}.

\section{Monotonicity and convergence}\label{sec:convergence}

We now turn to the main mechanism behind this result, a monotonicity principle for the atom-selection frequencies.
Under the full combinatorial data model, an ordering of the correlations with the dictionary atoms is preserved by the refinement step. 
We then show that a random initialization creates, with high probability, a sufficient separation among the leading atoms, and use these two facts to prove convergence to a single atom and, ultimately, full dictionary recovery.

\subsection{Monotonicity principle}

For our theoretical analysis, we consider the \emph{full combinatorial data} matrix $\mathbf{X} = [\mathbf{x}_1, \dots, \mathbf{x}_M] \in \{0,1\}^{K \times M}$, where $M = \binom{K}{s}$. The columns of $\mathbf{X}$ enumerate every binary $s$-sparse vector in $\{0,1\}^K$ exactly once. Explicitly, for each $s$-element subset $\{i_1, \dots, i_s\} \subset \{1, \dots, K\}$, $\mathbf{X}$ contains a unique column $\mathbf{x} = (x_1, \dots, x_K)^T$ defined by
\[
x_k = \begin{cases} 
1, & k \in \{i_1, \dots, i_s\}, \\ 
0, & \text{otherwise.} 
\end{cases}
\]
Throughout this section we assume that ${\bf A}^T{\bf A}={\bf I}_K$.

The aim of the IAR procedure is to turn an arbitrary unit vector
${\bf d}^{(0)}\in\mathbb R^N$ into a vector aligned with a single dictionary column.
Write
\[
\tilde{\bf d}^{(0)} := {\bf A}^T{\bf d}^{(0)} \in\mathbb R^K
\]
and, without loss of generality, assume its components are strictly decreasing, so
\begin{equation}\label{dees}
\tilde{d}^{(0)}_1 > \tilde{d}^{(0)}_2 > \tilde{d}^{(0)}_3 > \cdots > \tilde{d}^{(0)}_K.
\end{equation}
If the  components are strictly decreasing, then we expect that ${\bf d}^{(r)} \to {\bf a}_1$, a column of $A$ such that  $A^T {\bf a}_1 = {\bf e}_1= (1,0,0,\dots,0)^T$ (Assume that ${\bf A}^T{\bf A}={\bf I}_K$). 

The key observation is that the ordering in~\eqref{dees} is preserved at each iteration. In other words, if ${\bf \tilde d}^{(0)}$ satisfies~\eqref{dees}, then at each iteration $r$ we have
\begin{equation}\label{dees_r}
\tilde{d}^{(r)}_1 \ge \tilde{d}^{(r)}_2 \ge \tilde{d}^{(r)}_3 \ge \cdots \ge \tilde{d}^{(r)}_K.
\end{equation}

To see this, let \# denote the cardinalities of sets, and let ${\bf x}_i$ be the sparse coefficient vector associated with the data sample $\mathbf{y}_i$. We denote its support by $s_i = \mathrm{supp}\,{\bf x}_i$. The main counting\footnote{The quantities $p_k^{(r)}$ are unobservable during the selection step. They are revealed during the evaluation of the next iterate of ${\bf d}^{(r)}$.} quantity is the {\it atom-selection frequency }
\begin{equation}\label{miguel1}
p_{k}^{(r+1)} = \frac{
  \# \left\{ i :\ k \in s_i,\ \langle {\bf {d}}^{(r)}, {\bf y}_i \rangle > \alpha_r\right\}
}{
  L
}.
\end{equation}
Here, $\langle {\bf {d}}^{(r)}, {\bf y}_i \rangle > \alpha_r$ is the threshold criterion that defines the selected set~\eqref{sy}, and $k \in s_i$ means $k$ is in the support of ${\bf x}_i$ and ${\bf y}_i= {\bf A} {\bf x}_i$.
Thus, $p_k^{(r+1)}$ is the fraction of the $L$ selected samples in which the $k$-th atom appears in the sparse representation at iteration $r+1$. 

By construction, each iterate ${\bf d}^{(r)}$, $r \geq 1$ lies in the span of the dictionary atoms. Hence, there exists a coefficient vector ${\bf p}^{(r)}=\bigl[p_1^{(r)},\dots,p_K^{(r)}\bigr]^T$ such that
\[
{\bf d}^{(r)}=\gamma_r\, {\bf A}\,{\bf p}^{(r)},
\]
where $\gamma_r>0$ is chosen so that $\|{\bf d}^{(r)}\|_2=1$. More specifically, observe that
\[
{\bf d}^{(r+1)} = \gamma' \sum_{i = 1}^{M} \chi_i^{(r)} {\bf y}_i
=\gamma' {\bf A} \sum_{i = 1}^{M} \chi_i^{(r)} {\bf x}_i
= \gamma \,  {\bf A} \,  {\bf p}^{(r+1)}.
\]
where $\chi_i^{(r)}$ is the indicator function \eqref{threshold} for sample selection at iteration $r$.

Therefore, ${\bf \tilde d}^{(r)} = {\bf A}^T {\bf d}^{(r)} = \text{Const}\cdot{\bf p}^{(r)}$ and monotonicity of ${\bf p}^{(r)}$
is equivalent to monotonicity of ${\bf \tilde d}^{(r)}$. For the full data model, ${\bf p}^{(r+1)}$ can be interpreted as a vector of conditional probabilities
\begin{equation}\label{full_data}
p_{k}^{(r+1)} = \text{Prob}\left\{ k \in s \;\middle|\; \langle {\bf d}^{(r)}, {\bf y} \rangle > \alpha_r\right\}.
\end{equation}

The non-strict  monotonicity of the probabilities $p_k^{(r+1)} \ge p_{k+1}^{(r+1)}$ can be established through the following counting argument.
The key point is that, under full data, the truncation step always
selects at least as many samples containing index $k$ as samples containing index $k+1$,
provided
\[
\tilde d^{(r)}_k \ge \tilde d^{(r)}_{k+1}.
\]
Compare cardinalities of the two sets
\[
S_k^{(r)}(\alpha_r)
   := \bigl\{ i:\ k \in s_i,\ \langle {\bf d}^{(r)}, {\bf y}_i\rangle > \alpha_r \bigr\} \quad \text{and} \quad S_{k+1}^{(r)}(\alpha_r)
   := \bigl\{ i:\ k+1 \in s_i,\ \langle {\bf d}^{(r)}, {\bf y}_i\rangle > \alpha_r \bigr\}.
\]
If there were no truncation (i.e.\ $\alpha_r=-\infty$), then by symmetry of the full
data set we would have
\[
\# S_k^{(r)}(-\infty) = \# S_{k+1}^{(r)}(-\infty).
\]
The only samples that can break this parity under a finite threshold
$\alpha_r$ are those whose supports distinguish between $k$ and $k+1$.
There are exactly two such disjoint types:
\[
\text{(i)}\quad 
k\in s_i,\ k+1\notin s_i,
\qquad\text{and}\qquad
\text{(ii)}\quad 
k\notin s_i,\ k+1\in s_i.
\]
Now take any sample ${\bf x}_i$ of type~(ii), i.e.,
\[
x_{k,i}=0,\qquad x_{k+1,i}=1.
\]
Construct from it a new vector $\widehat{\bf x}_i$ obtained by
``moving the 1'' from index $k+1$ to index $k$, so
\[
\widehat x_{k,i}=1,\qquad \widehat x_{k+1,i}=0,
\qquad\text{and}\qquad
\widehat x_{\ell,i}=x_{\ell,i}\ \text{for }\ell\neq k,k+1.
\]
Because we are using the full data set, every admissible support
configuration appears exactly once; hence $\widehat{\bf x}_i$ is also one of the
columns of ${\bf X}$. Since $\tilde{d}^{(r)}_{k} \ge \tilde{d}^{(r)}_{k+1}$, we obtain
\[
\langle {\bf d}^{(r)}, \widehat{\bf y}_i \rangle - \langle {\bf d}^{(r)}, {\bf y}_i \rangle 
= \langle {\bf \tilde d}^{(r)}, \widehat{\bf x}_i \rangle - \langle {\bf \tilde d}^{(r)}, {\bf x}_i \rangle 
= \tilde{d}^{(r)}_{k} - \tilde{d}^{(r)}_{k+1} \geq 0,
\]
where $\widehat{\bf y}_i:=A\widehat{\bf x}_i$.
If $\langle {\bf d}^{(r)}, {\bf y}_i \rangle > \alpha_r$, then $\langle {\bf d}^{(r)}, \widehat{\bf y}_i \rangle > \alpha_r$.
This guarantees
\[
\#\, S_k^{(r)}(\alpha_r) \geq \#\, S_{k+1}^{(r)}(\alpha_r),
\]
establishing the non-strict  monotonicity of the selection probabilities under full data.

At this point one would like to conclude that $p_{1}^{(r)} \to 1$ as $r \to \infty$. Unfortunately, the non-strict monotonicity of  $p_{k}^{(r)}$
alone is not enough to arrive at this conclusion. 
More precisely, to force $p_{1}^{(r)}$ to dominate in the limit we need \emph{both} (i) preservation of the ordering \(p_k^{(r)}\ge p_{k+1}^{(r)}\) at every
iteration, \emph{simultaneously for all indices $k$}, and (ii) quantitative lower bounds on the gaps $p_{k}^{(1)}-p_{k+1}^{(1)}$, in the first iteration,  for the \emph{first finitely many} indices $k$.
Thus, we need stronger estimates on the size of the gap between successive proportions \(p_{k}^{(1)}\) and \(p_{k+1}^{(1)}\) for the full data. 
Recall that (see~\eqref{full_data})
for the full data  we can express these proportions as probabilities $p_{1}$, $\dots$, $p_{K}$, respectively, where we omitted the superscript $(1)$. 

\subsection{Separation after the first refinement}

We now define these probabilities precisely and quantify their separation.

Assume that the initial vector ${\bf d}^{(0)}$ is chosen uniformly at random
from the unit sphere in $\mathbb{R}^{N}$.  Since the dictionary has
orthonormal columns, the distribution of
\[
\widetilde{\bf d}^{(0)}
=
{\bf A}^{T}{\bf d}^{(0)}
\]
is rotationally invariant in $\mathbb{R}^{K}$.  Moreover, all the estimates
below are invariant under multiplication of $\widetilde{\bf d}^{(0)}$ by a
positive scalar.  We may therefore replace its normalized, rotationally
invariant distribution by an unnormalized standard Gaussian vector.  Thus,
after omitting the superscript $(0)$, we may assume that
\begin{equation}\label{normal_t}
    \widetilde d_j
    \overset{\mathrm{i.i.d.}}{\sim}
    \mathcal{N}(0,1),
    \qquad
    j=1,\dots,K.
\end{equation}

Let ${\bf x}$ be a uniformly chosen column of the full combinatorial data
matrix ${\bf X}\in\{0,1\}^{K\times M}$, chosen independently of
$\widetilde{\bf d}$.  Every column of ${\bf X}$ has exactly $s$ nonzero
entries.  Hence, 
\begin{equation}\label{main_y}
    z
    =
    \langle \widetilde{\bf d},{\bf x}\rangle
    \sim
    \mathcal{N}(0,s).
\end{equation}
Since
\[
    \langle {\bf d},{\bf y}\rangle
    =
    \langle {\bf d},{\bf A}{\bf x}\rangle
    =
    \langle {\bf A}^{T}{\bf d},{\bf x}\rangle
    =
    \langle \widetilde{\bf d},{\bf x}\rangle.
\]

Let $\Phi$ denote the standard Gaussian CDF, set
\[
q_K:=\Phi^{-1}\!\left(1-\frac1K\right).
\]
and define the deterministic reference threshold
\[
\bar\alpha_0:=\sqrt{s}\,q_K.
\]
Since $z/\sqrt{s}$ is standard Gaussian,
\[
\mathbb P\{z\ge\bar\alpha_0\}=\frac1K.
\]
The actual threshold $\alpha_0$ used by IAR depends on the realization of
$\widetilde{\bf d}$ and is chosen so that exactly a $1/K$ fraction of the
full data is selected. In the proof of \Cref{lemma4} we will show that
\[
\alpha_0=\bar\alpha_0+o_{\mathbb P}(q_K^{-1}).
\]

The usual Gaussian-tail asymptotics give
\[
\bar\alpha_0
=
\sqrt{s}\left(
\sqrt{2\log K}
-
\frac{\log\log K+\log(4\pi)}
     {2\sqrt{2\log K}}
+
o\bigl((\log K)^{-1/2}\bigr)
\right),
\]
and, in particular,
\[
\bar\alpha_0
\sim
\sqrt{2s\log K}
\qquad
\text{as }K\to\infty.
\]

Finally, by permutation symmetry of the model, we may relabel the dictionary
atoms and assume without loss of generality that
\[
    \widetilde d_1
    \geq
    \widetilde d_2
    \geq
    \cdots
    \geq
    \widetilde d_K.
\]

Fix a realization of $\widetilde{\bf d}$, and let ${\bf x}$ be a uniformly
chosen column of the full combinatorial data matrix. For each $k$, let
$z_k$ denote the conditional random variable
\[
z_k
=
\langle\widetilde{\bf d},{\bf x}\rangle
\quad\text{given}\quad
k\in\operatorname{supp}({\bf x}),
\]
where the only randomness is in the choice of ${\bf x}$. Set
\[
p_k
:=
\mathbb P_{\bf x}\!\left(
z_k>\alpha_0
\,\middle|\,
\widetilde{\bf d}
\right).
\]
Thus $p_k$ is itself a random variable through its dependence on the random
initialization $\widetilde{\bf d}$.

Recall that for the full data model $p_k^{(1)}$ are probabilities,
see~\eqref{full_data}. Since
\[
\mathbb P_{\bf x}\bigl(k\in\operatorname{supp}({\bf x})\bigr)=\frac{s}{K}
\]
and the selected set has probability $1/K$, we have
\[
p_k^{(1)}=s\,p_k.
\]
Hence ratios between the $p_k^{(1)}$ are the same as ratios between the
$p_k$. We use this equivalence in the following key Lemma.

\begin{lemma}[Separation after the first refinement]\label{lemma4}
Fix \(s\ge2\). For every \(\eta>0\), there exist
\(c\in(0,1)\), \(\varepsilon>0\), and an integer \(\kappa\)
such that
\[
c+(s-1)\varepsilon<1
\]
and, for all sufficiently large \(K\), with probability at least
\(1-\eta\) over the random initialization,
\[
p_2^{(1)}\le c\,p_1^{(1)}
\qquad\text{and}\qquad
p_\kappa^{(1)}\le \varepsilon p_1^{(1)}.
\]
Consequently, by monotonicity,
\[
p_k^{(1)}\le \varepsilon p_1^{(1)}
\qquad\text{for all }k\ge\kappa.
\]
\end{lemma}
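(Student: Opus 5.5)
We reduce everything to the conditional probabilities $p_k=\mathbb P_{\bf x}(z_k>\alpha_0\mid\widetilde{\bf d})$, since $p_k^{(1)}=s\,p_k$ and only ratios matter. Given $k\in\operatorname{supp}({\bf x})$, we have $z_k=\widetilde d_k+W_k$, where $W_k$ is the sum of $\widetilde d_j$ over a uniformly random $(s-1)$-subset of $\{1,\dots,K\}\setminus\{k\}$. The plan is to show that, for large $K$, $W_k$ is approximately distributed as $\mathcal N(0,s-1)$, conditionally on $\widetilde{\bf d}$ and with high probability over $\widetilde{\bf d}$. The reason is that the empirical distribution of $\{\widetilde d_j\}$ converges to $\mathcal N(0,1)$, and removing one index or sampling without replacement changes nothing at scale $1/K$. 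The precise control we need is on the upper tail at level $\alpha_0-\widetilde d_k$.

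\textbf{Step 1: the threshold.} The selected fraction is $F(\alpha)=\binom{K}{s}^{-1}\#\{|S|=s:\sum_{j\in S}\widetilde d_j>\alpha\}$. Its mean over $\widetilde{\bf d}$ is $\bar\Phi(\alpha/\sqrt s)$. A variance computation shows that two random subsets overlap with probability $O(s^2/K)$, so the relative fluctuations are $o(1)$. Monotonicity in $\alpha$ and the density of $\mathcal N(0,s)$ near $\bar\alpha_0$ (of order $q_K/K$) then give $\alpha_0=\bar\alpha_0+o_{\mathbb P}(q_K^{-1})$, as anticipated in the text.

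\textbf{Step 2: the top order statistics.} The $\widetilde d_k$ for small $k$ are Gaussian extremes, $\widetilde d_k=b_K+E_k/b_K$, where $b_K\approx\sqrt{2\log K}$ and $(E_1>E_2>\cdots)$ converges to the points of a Poisson process with intensity $e^{-x}dx$. In particular, $E_1-E_2>0$ almost surely in the limit, and $E_1-E_\kappa\to\infty$ as $\kappa\to\infty$. Given $\eta$, we choose $\delta,\kappa$ so that with probability at least $1-\eta$ both $E_1-E_2\ge\delta$ and $E_1-E_\kappa\ge\Lambda$ hold, where $\Lambda$ is large.

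\textbf{Step 3: the ratio estimate.} We need $p_k\approx\bar\Phi\bigl((\alpha_0-\widetilde d_k)/\sqrt{s-1}\bigr)$. The argument $u_k=(\alpha_0-\widetilde d_k)/\sqrt{s-1}$ is of order $(\sqrt s-1)\sqrt{2\log K}/\sqrt{s-1}\to\infty$. Differences $\widetilde d_1-\widetilde d_k=(E_1-E_k)/b_K$ shift $u$ by $O(1/b_K)$, and since $u_k\sim\frac{\sqrt s-1}{\sqrt{s-1}}b_K$, the Mills ratio gives
\[
\frac{p_k}{p_1}\approx\exp\Bigl(-\tfrac{\sqrt s-1}{s-1}(E_1-E_k)\Bigr).
\]
We take $c=\exp(-\tfrac{\sqrt s-1}{s-1}\delta)+o(1)<1$ and $\varepsilon=\exp(-\tfrac{\sqrt s-1}{s-1}\Lambda)$. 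Choosing $\Lambda$ large forces $c+(s-1)\varepsilon<1$. The claim for all $k\ge\kappa$ then follows from the monotonicity established above.

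\textbf{Main obstacle.} The hardest step is justifying the Gaussian tail approximation for $W_k$ at the moderately large deviation level $u_k\asymp\sqrt{\log K}$, with relative error $o(1)$ uniformly in $k\le\kappa$. The relevant probabilities are polynomially small in $K$, so a plain CLT is not enough, and the top atoms $\widetilde d_2,\dots$ themselves may enter $W_1$. To handle this, I would first condition on whether $W_k$ contains any of the top $O(1)$ atoms; the contribution from such subsets is suppressed by a factor $O(1/K)$ times a bounded tail gain. On the remaining bulk, I would use the explicit $(s-1)$-fold convolution of the empirical measure, comparing it with the Gaussian via a Cramér-type tilt. The tilt parameter is bounded by $\sqrt{2\log K}$, and the empirical exponential moments $K^{-1}\sum_j e^{\theta\widetilde d_j}$ concentrate for $\theta\le C\sqrt{\log K}$ provided $\theta^2<\log K$ suitably; this needs checking for $s=2$. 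The same tilted estimate handles the $o(q_K^{-1})$ perturbation of $\alpha_0$, because it changes $u_k$ by $o(1/b_K)$ and hence changes the ratios by factors $1+o(1)$.
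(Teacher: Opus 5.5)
Your outline follows the paper's proof: concentration of $\alpha_0$ around $\bar\alpha_0=\sqrt s\,q_K$, the approximation $p_k\approx\bar\Phi\bigl((\alpha_0-\widetilde d_k)/\sqrt{s-1}\bigr)$, the Poisson extreme-value limit for the top order statistics, and a Mills-ratio computation. Your exponent $\frac{\sqrt s-1}{s-1}=\frac{1}{\sqrt s+1}$ is the paper's $p$, so your $\exp(-p(E_1-E_k))$ is the paper's $(S_1/S_k)^p$.

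However, the two probabilistic estimates that carry the proof are not established. The first is in Step 1. That two $s$-subsets overlap with probability $O(1/K)$ does not by itself make the relative variance $o(1)$. Because $\mathbb E F=1/K$, pairs sharing one index contribute about $K\,\mathbb P(\text{both sums exceed})$ to $\mathrm{Var}\,F/(\mathbb E F)^2$. The trivial bound $\mathbb P(\text{both})\le 1/K$ only makes this $O(1)$. What is needed is the bivariate Gaussian tail estimate: for normalized sums with correlation $r/s<1$, joint exceedance at level $q_K+O(q_K^{-1})$ has probability $K^{-2s/(s+r)+o(1)}$. The $r$-overlap contribution is then $K^{2-r-2s/(s+r)+o(1)}\to0$.

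The more serious gap is the step you flag as the main obstacle, which you leave open; the Cram\'er-tilt route does not close it as sketched. Concentration of $K^{-1}\sum_j e^{\theta\widetilde d_j}$ at one $\theta$ yields only a Chernoff-type estimate, off by a factor of order $q_K$. You need relative error $o(1)$, since the targets $c<1$ and $\varepsilon$ are $O(1)$ ratios. That would require, in addition, a local comparison of the tilted empirical measure at scale $1/q_K$.

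The paper instead reuses the second-moment argument on the $(s-1)$-subset counting function $F_{s-1,K}(t)=\binom{K}{s-1}^{-1}\#\{|T|=s-1:\ \sum_{j\in T}\widetilde d_j\ge t\}$ at $t=(\sqrt s-1)q_K+O(q_K^{-1})$. There:
\begin{itemize}
\item the mean is $K^{-c(s)+o(1)}$ with $c(s)=\frac{\sqrt s-1}{\sqrt s+1}<1$;
\item pairs sharing $r\ge1$ indices contribute $K^{-r+2c(s)r/(s-1+r)+o(1)}\to0$ to the relative variance;
\item monotonicity in $t$ turns pointwise concentration into uniform concentration on the window, which accommodates the random level $\alpha_0-\widetilde d_k$.
\end{itemize}
Because $F_{s-1,K}$ is a symmetric function of all the $\widetilde d_j$, the top atoms need no separate treatment. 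Excluding the index of $\widetilde d_k$ costs only $O(1/K)=o(p_k)$, again because $c(s)<1$. In particular $s=2$ is the easiest case, not a delicate one: $F_{1,K}$ is a binomial proportion with mean $K^{-(3-2\sqrt2)+o(1)}$.

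With this estimate your Step 3 goes through, with one adjustment. Fix a constant $c\in(e^{-p\delta},1)$ rather than $e^{-p\delta}+o(1)$, then choose $\varepsilon$, and choose $\kappa$ last. The paper does this via the uniform law of $S_1/S_2$ and the $\mathrm{Beta}(1,\kappa-1)$ law of $S_1/S_\kappa$.
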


We defer the proof of this Lemma to \Cref{tech_lemma}. 

\subsection{Single-atom convergence}

We first state the assumptions used in the convergence theorem.

\begin{theoreticalassumptions}\label{assump:theoretical_assumptions}
Let $K\le N$ and
$
{\bf A}=[{\bf a}_1,\dots,{\bf a}_K]\in\mathbb{R}^{N\times K},
\,
{\bf A}^T{\bf A}={\bf I}_K.
$
The data matrix is
\[
{\bf Y}={\bf A}{\bf X},
\]
where ${\bf X}\in\{0,1\}^{K\times M}$ is the full combinatorial data matrix
whose columns are all binary vectors with exactly $s$ nonzero entries. Thus,
$
M=\binom{K}{s}.
$

The IAR iteration is
\[
{\bf d}^{(r+1)}
=
\frac{{\bf Y}{\boldsymbol\chi}^{(r)}}
{\|{\bf Y}{\boldsymbol\chi}^{(r)}\|_2},
\qquad
{\boldsymbol\chi}^{(r)}
=
\mathcal{T}_{\alpha_r}\bigl({\bf Y}^T{\bf d}^{(r)}\bigr),
\]
where $\alpha_r$ is chosen so that the associated selection set $S^{(r)}$
has cardinality $L=M/K$.

Finally, we assume a random initialization $\mathbf{d}^{(0)}$ drawn
uniformly from the unit sphere.
\end{theoreticalassumptions}

\begin{remark}
The orthogonality assumption is not essential from a geometric point of view.
Indeed, if ${\bf A}$ has full column rank, the Löwdin orthogonalization
\[
\widehat{\bf A}
=
{\bf A}({\bf A}^T{\bf A})^{-1/2}
\]
has orthonormal columns. Equivalently, after an invertible linear change of
coordinates on $\operatorname{span}{\bf A}$, the data can be written as
\[
\widehat{\bf y}_i
=
\widehat{\bf A}{\bf x}_i
\]
with the same sparse coefficient vectors ${\bf x}_i$. Thus the orthogonality
assumption can be viewed as a normalization of the dictionary. 

A related transformation can be estimated directly from the data by whitening.
Indeed, if the coefficient covariance is proportional to the identity, then
the inverse square root of ${\bf Y}{\bf Y}^T$ restricted to
$\operatorname{span}{\bf Y}$
maps the dictionary to an orthogonal one, up to normalization of its columns.
More generally, when the coefficient covariance is close to a scalar multiple
of the identity, the resulting dictionary is correspondingly close to
orthogonal.
\end{remark}

\begin{theorem}[Convergence of Iterative Atom Refinement]
\label{thm:single_atom}
Let ${\bf d}^{(0)}$ denote the initial unit vector of the IAR iteration.
Under the theoretical assumptions stated above, with high probability over
the random choice of ${\bf d}^{(0)}$,
\[
p_1^{(r)}\longrightarrow 1
\qquad\text{as } r\to\infty,
\]
where $p_k^{(r)}$ is defined in~\eqref{full_data}. Thus, IAR asymptotically
isolates the dictionary atom with the largest initial correlation with
${\bf d}^{(0)}$.
\end{theorem}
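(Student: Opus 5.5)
Conditioning on the event of \Cref{lemma4}, which has probability at least $1-\eta$, I would show that the first refinement already places $\tilde{\bf d}^{(1)}\propto{\bf p}^{(1)}$ in a region that the dynamics map into itself while the leading coordinate strictly gains relative mass. The relevant quantities are the normalized gaps between the leading frequency and the rest. The two ingredients are the monotonicity principle, which shows that $p_k^{(r)}\ge p_{k+1}^{(r)}$ is preserved for all $k$ and all $r$, and the quantitative separation $p_2^{(1)}\le c\,p_1^{(1)}$ and $p_k^{(1)}\le\varepsilon p_1^{(1)}$ for $k\ge\kappa$, with $c+(s-1)\varepsilon<1$.

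\textbf{Step 1: a sample with atom $1$ beats every sample without it.} After step $1$, the score of a sample with support $s_i$ is proportional to $\sum_{j\in s_i}p_j^{(1)}$. A support containing $1$ scores at least $p_1^{(1)}+\sum_{\text{$s-1$ smallest}}p_j^{(1)}$. For $K$ large, the $s-1$ smallest frequencies all have index $\ge\kappa$, but I will only need the trivial lower bound $p_1^{(1)}$. A support avoiding $1$ scores at most $p_2^{(1)}+(s-1)\max_{j\ge 3}p_j^{(1)}$. The delicate point is the indices $2\le j<\kappa$, which are only bounded by $c\,p_1^{(1)}$. The cleanest bound is therefore a support of the form $\{2,3,\dots,s+1\}$ when $s+1<\kappa$, and I would need $c+(s-1)c<1$, which the lemma does not give. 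So the plan instead is to run a few iterations, which is presumably why ``three steps'' appear.

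\textbf{Step 2: show that the selected set at step $1$ already makes the tail negligible and shrinks $p_2/p_1$.} I would compute $p_k^{(2)}$ as the fraction of the $L=\binom{K-1}{s-1}$ top supports containing $k$. Almost all supports contain only indices $\ge\kappa$, whose combined weight is at most $s\varepsilon p_1^{(1)}$. Supports of the form $\{1\}\cup T$ with $T\subset\{\kappa,\dots,K\}$ have weight $\ge p_1^{(1)}$. These number about $\binom{K-1}{s-1}=L$, and they outrank every support that avoids $1$ and meets $\{2,\dots,\kappa-1\}$ in at most one index, since such a support has weight at most $c\,p_1^{(1)}+(s-1)\varepsilon p_1^{(1)}<p_1^{(1)}$. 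Only supports meeting $\{2,\dots,\kappa-1\}$ twice or more can compete, and there are $O(\kappa^2K^{s-2})=o(L)$ of them. Hence $p_1^{(2)}=1-o(1)$ while $p_k^{(2)}=O(\kappa/K)$ for $k\ge 2$, because index $k\ne 1$ appears in only $O(K^{s-2})$ of the selected supports.

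\textbf{Step 3: close the argument.} Once $\sum_{k\ge2}$ over any $s-1$ of the $p_k^{(2)}$ is below $p_1^{(2)}$, which Step 2 gives for large $K$, every support containing $1$ outranks every support avoiding $1$. Exactly $L=\binom{K-1}{s-1}$ supports contain $1$, so $S^{(2)}$ is precisely this set and $p_1^{(3)}=1$. This is a fixed point: by the same comparison, the top $L$ samples remain those containing $1$. By the symmetry of the full data, the other coordinates become equal, $p_k^{(3)}=(s-1)/(K-1)$, and $p_1^{(r)}=1$ for all $r\ge 3$, which gives the claimed convergence.

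The main obstacle is the counting in Step 2, namely bounding precisely how many supports avoiding $1$ can compete with the $\{1\}\cup T$ supports. This requires careful control of ties and of the $o(L)$ error term uniformly in the realization on the high-probability event.
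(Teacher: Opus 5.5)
Your argument follows the paper's proof step for step. On the event of \Cref{lemma4}, the score threshold at the second selection is at least $p_1^{(1)}$, which is the paper's $\alpha_1\ge\tilde d_1^{(1)}$. A selected support avoiding index $1$ must then meet $\{2,\dots,\kappa-1\}$ at least twice, and there are only $O(\kappa^2K^{s-2})=o(L)$ such supports. Hence $p_1^{(2)}=1-O(1/K)$ and $p_k^{(2)}=O(1/K)$ for $k\ge2$, after which supports containing $1$ strictly outrank all others, $p_1^{(3)}=1$, and one inducts.

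One counting error must be fixed, because it makes part of your Step~3 false. The selected set has size $L=M/K=\binom{K}{s}/K$, whereas the number of supports containing index $1$ is $\binom{K-1}{s-1}=\frac{s}{K}\binom{K}{s}=sL$, not $L$.

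In Step~2 this only helps you. All $sL\ge 2L$ supports containing $1$ score at least $p_1^{(1)}$, not just those with $T\subset\{\kappa,\dots,K\}$. So the $L$-th largest score is at least $p_1^{(1)}$ with room to spare.

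In Step~3, however, $S^{(2)}$ is not the set of all supports containing $1$. It is the top $1/s$ fraction of them, ranked by $\sum_{j\in T}p_j^{(2)}$ with $T=s_i\setminus\{1\}$. So the symmetry argument giving $p_k^{(3)}=(s-1)/(K-1)$ for all $k\ge2$ does not apply. That value is only the average of the $p_k^{(3)}$; the individual values are in general unequal. Your ``fixed point'' claim is therefore unsupported, and only the property $p_1^{(r)}=1$ is invariant.

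Replace the symmetry claim by the bound the paper uses. If every selected support contains $1$, then each $k\ge2$ occurs in at most $\binom{K-2}{s-2}$ of them, so
\[
p_k^{(r)}\le\frac{\binom{K-2}{s-2}}{L}=\frac{s(s-1)}{K-1},\qquad k\ge2 .
\]
A support avoiding $1$ then scores at most $s^2(s-1)/(K-1)$. For large $K$ this is below $1$, the minimum score of a support containing $1$. Since there are $sL\ge L$ supports containing $1$, $S^{(r)}$ again consists only of such supports, and $p_1^{(r+1)}=1$ follows by induction.

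Relatedly, a support avoiding $1$ has $s$ indices, so your Step~3 condition should bound sums of $s$, not $s-1$, of the $p_k^{(2)}$. With $p_k^{(2)}=O(1/K)$ this is harmless.
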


\begin{remark}\label{rem:finite-step}
Under the theoretical assumptions above, we in fact prove the stronger finite-step result
\[
p_1^{(r)}=1,
\qquad r\ge 3.
\]
This exact finite-step convergence relies on the highly structured setting considered here, in particular on the orthogonality of the dictionary, the binary coefficients, and the full combinatorial data set. Nevertheless, our numerical experiments indicate that the same qualitative behavior persists in considerably more general and practically relevant settings: the fraction of selected samples containing the dominant atom rapidly approaches one and remains close to one after only a few refinement steps. 
\end{remark}

The proof of \Cref{thm:single_atom} is given in \Cref{iar_thm}.

\subsection{Full dictionary recovery}

We now use the stronger finite-step conclusion in \Cref{rem:finite-step}
to prove the full dictionary recovery theorem, \Cref{fd_r}.

\begin{proof}[Proof of \Cref{fd_r}]
By permutation symmetry, the atom singled out by a uniform random initialization is uniformly distributed over \(\{1,\dots,K\}\), independently from run to run.
Consider one run and, after relabeling, suppose that this atom is \({\bf a}_1\). For any \(r\ge3\), we have \(p_1^{(r)}=1\). Write
\[
{\bf d}^{(r)}=\frac{{\bf A}{\bf p}^{(r)}}{\|{\bf p}^{(r)}\|_2},
\qquad
{\bf p}^{(r)}=(1,p_2^{(r)},\dots,p_K^{(r)})^T.
\]
Since every sample used to construct \({\bf p}^{(r)}\) contains index \(1\), for every \(k\ge2\),
\[
p_k^{(r)}
\le
\frac{\binom{K-2}{s-2}}{M/K}
=
\frac{s(s-1)}{K-1}.
\]
If \(1\in\operatorname{supp}({\bf x}_i)\), then
\[
\langle{\bf d}^{(r)},{\bf y}_i\rangle
=
\frac{\langle{\bf p}^{(r)},{\bf x}_i\rangle}{\|{\bf p}^{(r)}\|_2}
\ge
\frac1{\|{\bf p}^{(r)}\|_2},
\]
whereas, if \(1\notin\operatorname{supp}({\bf x}_i)\), then
\[
\langle{\bf d}^{(r)},{\bf y}_i\rangle
\le
\frac{s\max_{k\ge2}p_k^{(r)}}{\|{\bf p}^{(r)}\|_2}
\le
\frac{s^2(s-1)}{(K-1)\|{\bf p}^{(r)}\|_2}.
\]
For sufficiently large \(K\), these two ranges are disjoint. Since exactly \(\binom{K-1}{s-1}\) columns of \({\bf X}\) contain index \(1\), the \(\binom{K-1}{s-1}\) samples with the largest correlations with \({\bf d}^{(r)}\) are precisely those for which \(x_{1i}=1\). Thus the entire row of \({\bf X}\) corresponding to the isolated atom is recovered exactly. The same argument applies to every atom isolated by an IAR run.
It remains to isolate all \(K\) atoms. The probability that a fixed atom is missed in \(R\) independent runs is \((1-1/K)^R\); hence, by the union bound, the probability that at least one atom is missed is at most
\[
K\left(1-\frac1K\right)^R
\le
K e^{-R/K}.
\]
For \(R=K(\log K+t)\), this is at most \(e^{-t}\). Therefore, with probability at least \(1-e^{-t}\), all rows of \({\bf X}\) are recovered. Since the full combinatorial matrix \({\bf X}\) has full row rank,
\[
{\bf A}
=
{\bf Y}{\bf X}^T
\bigl({\bf X}{\bf X}^T\bigr)^{-1},
\]
and the dictionary is recovered exactly.
\end{proof}

\section{Overcoming the Initialization Barrier}\label{sec:greedyMOD}

We first describe the imaging problems that led us to a greedy modification of the Method of Optimal Directions (MOD)~\cite{MOD}, then discuss why existing optimization theory does 
not explain the behavior of this modification, and finally show how IAR emerges as a tractable one-atom model of this modification.

\subsection{Motivation}
Our interest in dictionary learning comes from inverse problems in imaging, where the sensing operator may itself be unknown and must be inferred from the data~\cite{CIO-DL,Weak,Strong}. This is closely connected with blind deconvolution, where one seeks to recover both the unknown forward response and the underlying signal~\cite{ayers1988iterative,KundurHatzinakos1996}.

In~\cite{CIO-DL}, MOD was effective because a reliable initial approximation to the unknown dictionary could be constructed from prior information. In more difficult imaging settings~\cite{Weak}, this initial approximation was no longer sufficiently accurate, and obtaining a suitable initialization required an additional computationally expensive procedure~\cite{NovikovWhite}.

In~\cite{Weak}, we also showed that a neural-network-based method could recover the dictionary from random initialization. What was particularly striking was how it overcame the nonconvexity of the problem: multiple independent random initializations produced different candidate atoms, which were then combined through clustering to recover the dictionary. This behavior was quite different from standard MOD and from the theoretical mechanisms available to us at the time. The neural-network approach was, however, substantially more computationally expensive, and its recovery mechanism was difficult to analyze.

This led us in~\cite{Strong} to \emph{greedy MOD}, which reproduced much of the same behavior in a considerably simpler setting. At each stage, the algorithm favors the coefficients with the strongest current correlations and immediately updates the dictionary accordingly. We call this choice \emph{greedy} because it acts on the current preference rather than first solving the sparse-coding problem to convergence. The monotonicity principle proved above for IAR now provides a plausible explanation for this phenomenon: a small random preference for one atom is preserved and amplified until that atom is isolated, suggesting why repeated random initializations can produce the different atoms needed for full dictionary recovery.

To make this distinction precise, standard MOD~\cite{MOD} alternates between sparse coding and dictionary updates to solve the non-convex optimization problem
\begin{equation}\label{eq:mod_obj}
\min_{\mathbf A,\mathbf X}|\mathbf Y-\mathbf A\mathbf X|_F^2
\quad\text{subject to}\quad |\mathbf x_i|_0\le s.
\end{equation}
For comparison with greedy MOD, we consider standard MOD with ISTA~\cite{ISTA} for sparse coding. Other sparse-coding solvers can also be used, but ISTA makes the distinction particularly clear: standard MOD runs sparse coding to convergence before updating the dictionary, whereas greedy MOD performs only one sparse-coding iteration.

\begin{minipage}{0.85\textwidth}
\begin{algorithm}[H]
\caption{Standard MOD with ISTA sparse coding}
\label{alg:mod}
\begin{algorithmic}[1]
\STATE \textbf{Input}: Data matrix $\mathbf{Y} \in \mathbb{R}^{N \times M}$,
threshold parameter $\tau$, outer stopping criterion $\epsilon > 0$,
inner stopping criterion $\epsilon_{\mathrm{in}} > 0$, and target dictionary size $K$.
\STATE \textbf{Output}: Learned dictionary
$\mathbf{D}_{\mathrm{end}} \in \mathbb{R}^{N \times K}$.
\STATE Draw $\mathbf{X}^{(0)} \in \mathbb{R}^{K \times M}$ with i.i.d.\ standard Gaussian entries.
\STATE Compute
$\mathbf{D}^{(0)} = \mathbf{Y}\big(\mathbf{X}^{(0)}\big)^\dagger$,
normalize its columns to unit length, and set $n=0$.
\WHILE{$\|\mathbf{Y}-\mathbf{D}^{(n)}\mathbf{X}^{(n)}\|_F>\epsilon$}
    \STATE Set $L_n=\|\mathbf{D}^{(n)}\|_2^2$,
    $\mathbf{Z}^{(0)}=0$, and $j=0$.
    \REPEAT
        \STATE
        $\widetilde{\mathbf{Z}}^{(j+1)}
        =
        \mathbf{Z}^{(j)}
        +
        \dfrac{1}{L_n}
        \big(\mathbf{D}^{(n)}\big)^\top
        \big(\mathbf{Y}-\mathbf{D}^{(n)}\mathbf{Z}^{(j)}\big)$
        \STATE
        $\mathbf{Z}^{(j+1)}
        =
        \operatorname{sign}\big(\widetilde{\mathbf{Z}}^{(j+1)}\big)
        \odot
        \max\left(
        \big|\widetilde{\mathbf{Z}}^{(j+1)}\big|
        -\dfrac{\tau}{L_n},0
        \right)$
        \STATE $j=j+1$
    \UNTIL{$\|\mathbf{Z}^{(j)}-\mathbf{Z}^{(j-1)}\|_F\le\epsilon_{\mathrm{in}}$}
    \STATE Set $\mathbf{X}^{(n+1)}=\mathbf{Z}^{(j)}$.
    \STATE
    \begin{equation}\label{up1}
    \widetilde{\mathbf{D}}^{(n+1)}
    =
    \mathbf{Y}\big(\mathbf{X}^{(n+1)}\big)^\dagger.
    \end{equation}
    \STATE Set $\mathbf{D}^{(n+1)}$ as
    $\widetilde{\mathbf{D}}^{(n+1)}$ with columns normalized to unit length.
    \STATE $n=n+1$
\ENDWHILE
\STATE Set $\mathbf{D}_{\mathrm{end}}=\mathbf{D}^{(n)}$.
\STATE \textbf{return} $\mathbf{D}_{\mathrm{end}}$.
\end{algorithmic}
\end{algorithm}
\end{minipage}

In our experiments, standard MOD with random initialization usually fails to recover the generating dictionary. This is consistent with the available convergence guarantees, which are local and assume an initialization sufficiently close to the generating dictionary~\cite{Agarwal,Liang2025}; see \Cref{alg:mod}.

Greedy MOD~\cite{Strong} uses the same ISTA sparse-coding step and the same
dictionary update~\eqref{up1}, but performs only one ISTA iteration before
each dictionary update. Thus, the dictionary is updated immediately according
to the current preference of the sparse-coding step, rather than after sparse
coding has converged; see \Cref{alg:greedy_mod}.

\begin{minipage}{0.85\textwidth}
\begin{algorithm}[H]
\caption{Greedy MOD: Single ISTA Iteration, Random Restarts, and Clustering}
\label{alg:greedy_mod}
\begin{algorithmic}[1]
\STATE \textbf{Input}: Data matrix $\mathbf{Y} \in \mathbb{R}^{N \times M}$,
threshold parameter $\tau$, stopping criterion $\epsilon > 0$,
target dictionary size $K$, and number of restarts $\aleph$.
\STATE \textbf{Output}: Final learned dictionary
$\mathbf{D}_{\mathrm{end}} \in \mathbb{R}^{N \times K}$.
\STATE \textbf{Initialize}: Atom collection pool
$\boldsymbol{\mathcal{C}}\leftarrow\emptyset$.
\FOR{$r=1\text{ to } \aleph$}
    \STATE \tikzmark{top-mark}Draw
    $\mathbf{X}^{(0)} \in \mathbb{R}^{K \times M}$ with i.i.d.\ standard Gaussian entries.
    \STATE Compute
    $\mathbf{D}^{(0)}=\mathbf{Y}\big(\mathbf{X}^{(0)}\big)^\dagger$,
    normalize its columns to unit length, and set $n=0$.
    \WHILE{$\|\mathbf{Y}-\mathbf{D}^{(n)}\mathbf{X}^{(n)}\|_F>\epsilon$}
        \STATE Set $L_n=\|\mathbf{D}^{(n)}\|_2^2$.
        \STATE
        $\widetilde{\mathbf{X}}^{(n+1)}
        =
        \dfrac{1}{L_n}
        \big(\mathbf{D}^{(n)}\big)^\top\mathbf{Y}$
        \STATE
        $\mathbf{X}^{(n+1)}
        =
        \operatorname{sign}\big(\widetilde{\mathbf{X}}^{(n+1)}\big)
        \odot
        \max\left(
        \big|\widetilde{\mathbf{X}}^{(n+1)}\big|
        -\dfrac{\tau}{L_n},0
        \right)$
        \STATE
        $\widetilde{\mathbf{D}}^{(n+1)}
        =
        \mathbf{Y}\big(\mathbf{X}^{(n+1)}\big)^\dagger$
        \STATE Set $\mathbf{D}^{(n+1)}$ as
        $\widetilde{\mathbf{D}}^{(n+1)}$ with columns normalized to unit length.
        \STATE $n=n+1$
    \ENDWHILE\tikzmark{bot-mark}%
    \STATE Add all $K$ columns of the converged matrix
    $\mathbf{D}^{(n)}$ to the candidate pool $\boldsymbol{\mathcal{C}}$.
\ENDFOR
\STATE Cluster candidate atoms in $\boldsymbol{\mathcal{C}}$ based on mutual coherence.
\STATE Select the $K$ dominant cluster centroids to form the columns of
$\mathbf{D}_{\mathrm{end}}$.
\STATE \textbf{return} $\mathbf{D}_{\mathrm{end}}$.
\end{algorithmic}
\end{algorithm}

\begin{tikzpicture}[remember picture, overlay]
  \coordinate (bracketTop) at ([xshift=34em, yshift=1.5ex]top-mark);
  \coordinate (bracketBot) at (bracketTop |- bot-mark);
  
  \draw[decorate, decoration={brace, amplitude=9pt}, very thick] 
    (bracketTop) -- ([yshift=-0.5ex]bracketBot)
    node[midway, xshift=18pt, anchor=west, align=center, font=\small\bfseries]
    {MOD with \\ one ISTA iteration};
\end{tikzpicture}
\end{minipage}

Empirically, this simple modification changes the behavior dramatically: a single run rapidly recovers most dictionary atoms, while a small number of random restarts followed by clustering recovers the complete dictionary. Individual runs can nevertheless trap some atoms in spurious local minima (\Cref{fig:alignment_comparison} (b)), which motivates the use of multiple restarts and post-hoc clustering (\Cref{fig:alignment_comparison} (c)).

\subsection{Limits of existing optimization theory}
We next ask whether existing optimization theory explains why greedy MOD can recover dictionary atoms from random initialization. 
The available rigorous guarantees for practical alternating methods such as MOD~\cite{MOD} and K-SVD~\cite{KSVD} are local.
 Schnass~\cite{Schnass} established local identifiability conditions for the objective underlying K-SVD, Agarwal et al.~\cite{Agarwal} proved local convergence of a MOD-type alternating-minimization method for incoherent overcomplete dictionaries, and Liang et al.~\cite{Liang2025} proved linear convergence of a simple MOD-like alternating method for complete dictionary learning, again assuming a sufficiently accurate initialization.

Other optimization-based approaches do not explain the behavior of greedy MOD either. Zhai et al.~\cite{Zhai2020a} proposed an $\ell^4$-maximization problem over the orthogonal group together with an efficient iterative method, but their general convergence result still requires local initialization. Sun, Qu, and Wright~\cite{Wright} do obtain recovery from arbitrary initialization, but for a different nonconvex objective and with a substantially different algorithm based on Riemannian trust-region optimization, linear-programming rounding, and deflation.

Thus, existing optimization theory does not explain why such a simple MOD-type iteration can recover dictionary atoms from random initialization.

\subsection{IAR as a one-atom model of greedy MOD}

 IAR may be viewed as a simplified and analytically tractable model of greedy MOD~\cite{Strong}. It is obtained through two main simplifications.

The first simplification in IAR is that, rather than updating the entire dictionary at each iteration, it follows the evolution of a single dictionary atom. This reduction makes the dynamics substantially easier to analyze rigorously while preserving the essential refinement mechanism of greedy MOD. The reduction follows from the approximation
\[
{\bf X}{\bf X}^{T} \approx c\,{\bf I},
\qquad c>0.
\]
Then
\[
\big({\bf X}{\bf X}^{T}\big)^{-1}
\approx c^{-1}{\bf I},
\]
and the unnormalized MOD update~\eqref{up1} simplifies to
\[
\widetilde{\bf D}^{(n+1)}
\approx
\frac{1}{c}\,{\bf Y}{\bf X}^{(n)T}.
\]
Thus, the dictionary atoms decouple approximately and update independently as
\[
{\bf d}_k^{(n+1)}
\propto
{\bf Y}\,{\bf x}_k^{(n)T},
\]
where ${\bf x}_k^{(n)}$ is the $k$th row of ${\bf X}^{(n)}$. In this approximation, the global least-squares dictionary update is therefore replaced by independent averaging operations for the individual atoms.

A second simplification concerns the thresholding rule used in the sparse-coding step. Greedy MOD uses soft thresholding, whereas IAR uses hard thresholding. This removes the continuous shrinkage of the coefficients and makes the selection mechanism, and hence the analysis, considerably more transparent. Moreover, the adaptive threshold in IAR is chosen so that exactly a prescribed fraction of the samples is retained, selecting those most strongly correlated with the current iterate and directing the dynamics toward a single generating atom.

\section{Numerical experiments}\label{sec:numerics}

In this section, we empirically verify the key theoretical results of \Cref{thm:single_atom} and~\Cref{fd_r}, namely single-atom convergence,  selection frequency dynamics, and full dictionary recovery via random restarts. We then extend our evaluation to more general scenarios, such as data generated by non-orthogonal matrices, and benchmark IAR against both greedy MOD and standard MOD under identical experimental conditions.

\subsection{Theoretical Validation}

The numerical experiments in this subsection follow the assumptions of \Cref{thm:single_atom} and \Cref{fd_r}, using an orthogonal dictionary $\mathbf{A}$ alongside the full combinatorial sparse code matrix $\mathbf{X} \in \{0,1\}^{K \times M}$ with $s$-sparse binary columns.

\Cref{fig:iar_alignment} (left) tracks the evolution of the maximum alignment score, $\max_{1\le k\le K} |\langle \mathbf{d}^{(r)}, \mathbf{a}_k \rangle|$, across eight independent random initializations for $K = 300$ and $s = 3$. At $r = 0$, scores range from $0.15$ to $0.30$, reflecting the expected maximum inner product between a uniform random vector on $\mathcal{S}^{N-1}$ and $K$ orthonormal atoms. Within two iterations ($r=1, 2$), alignment surges to $0.90$--$0.97$, reaching a stable plateau of $0.9876$ by $r=3$. This small residual misalignment arises because the selected data samples contain interfering secondary atoms that do not cancel out completely. 

\begin{figure}[htbp]
\centering
\includegraphics[width=0.35\textwidth]{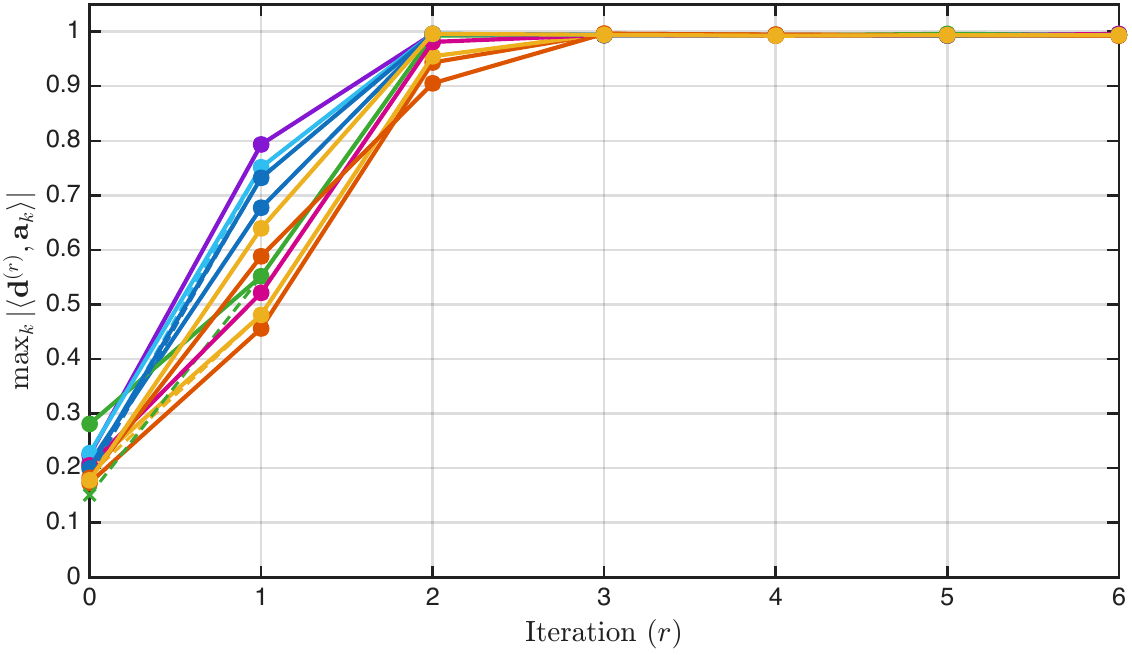} 
\includegraphics[width=0.32\textwidth]{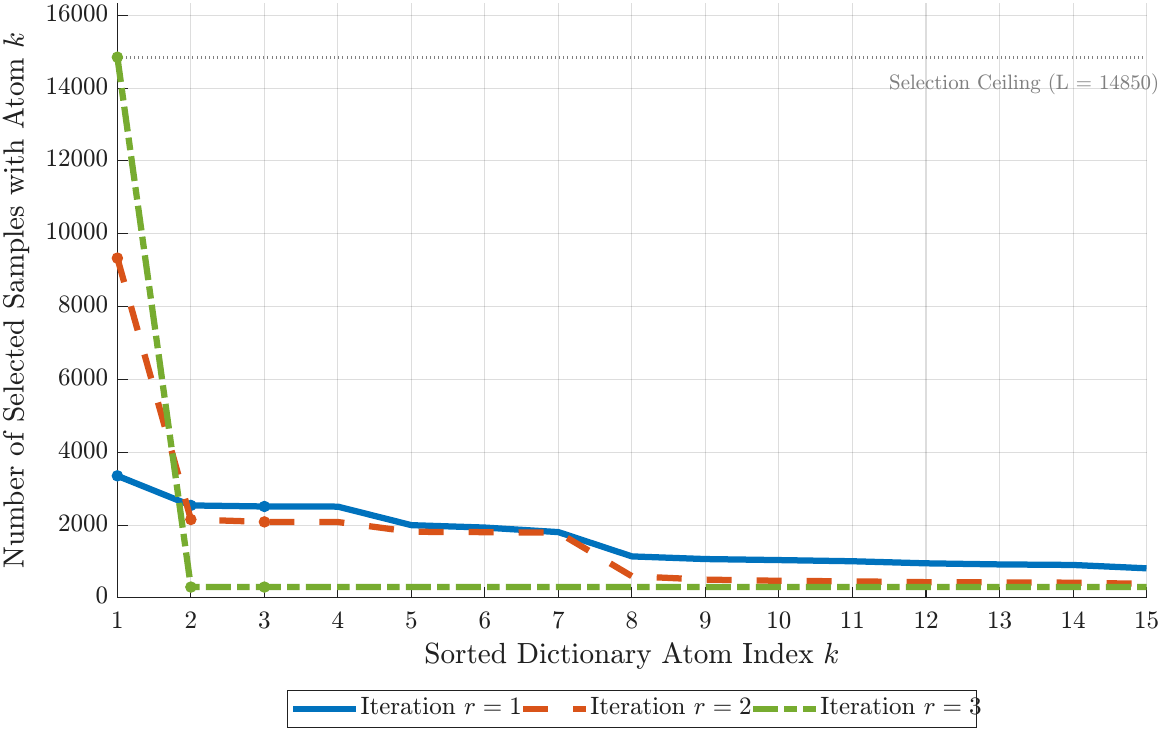}
\caption{Left: Alignment recovery 
over eight independent random initializations. Right: Ordered atom-selection profiles across the initial three iterations. 
}
    \label{fig:iar_alignment}
\end{figure}

\Cref{fig:iar_alignment} (right) tracks the sorted empirical atom-selection frequencies 
\begin{equation}
\widehat{p}_k^{(r)} := \frac{1}{L} \sum_{i \in \mathcal{S}^{(r)}} \mathbb{I}(k \in \operatorname{supp}(\mathbf{x}_i)),
\end{equation}
 across the first three iterations for the same simulation. 
The horizontal axis ranks dictionary atoms by descending selection frequency, while the vertical axis shows the corresponding sample counts $L \cdot \widehat{p}_k^{(r)}$ for samples clearing the threshold $\alpha_r$.
At iteration $r = 1$ (blue solid line), weak iterate alignment produces a nearly flat frequency curve across all atoms. By $r = 2$ (orange dashed line), sample aggregation triggers self-reinforcing alignment, opening a distinct gap between the target atom $k_{\star}=1$ and the background. By $r = 3$ (green dash-dotted line), the profile stabilizes. Note that, while the target atom completely dominates the primary position, background frequencies for $k \ge 2$ do not drop to zero. Instead, they flatten into a low-level persistent tail, causing the minor misalignment seen in  the left figure. 

\subsection{Empirical Extensions and Benchmarks}

To evaluate IAR beyond idealized conditions, we test its performance under limited sample sizes and non-orthogonal dictionaries, and benchmark against standard and greedy MOD. We begin by assessing recovery completeness under sample scarcity.

\Cref{fig:FigureC_CoverageCurve_K100} (left) tracks the empirical coverage fraction across independent restarts $R$ using only $1\%$ of the full dataset ${\bf Y}$ ($K = 500$, $s = 3$). Each trial yields a terminal vector ${\bf d}^{(r_\text{end})}$ that identifies an atom index via
$
\hat{k}=\arg\max_{1\le k\le K} |\langle {\bf d}^{(r_\text{end})},{\bf a}_k\rangle|\,,
$
generating the empirical cumulative coverage trajectory
\begin{equation}
R\longmapsto \frac{1}{K}\#\{\,\hat{k}_1,\dots,\hat{k}_R\,\}.
\end{equation}
The empirical trajectory (solid blue line) closely aligns with the coupon-collector baseline $1 - (1 - 1/K)^R$ (dashed red line), maintaining steady initial discovery before tapering off to achieve full ($100\%$) dictionary recovery at $R=3235$, very near the expected theoretical threshold $K \log K = 3107$.
\Cref{fig:FigureC_CoverageCurve_K100} (right) presents a more severe regime, with an extremely sparse data budget comprising only $0.05\%$ of the full dataset for  $K = 500$, $s = 4$.  Even under this drastic reduction in sample size, the algorithm maintains full $100\%$ dictionary coverage.

\begin{figure}[htbp]
    \centering
    \includegraphics[width=0.4\textwidth]{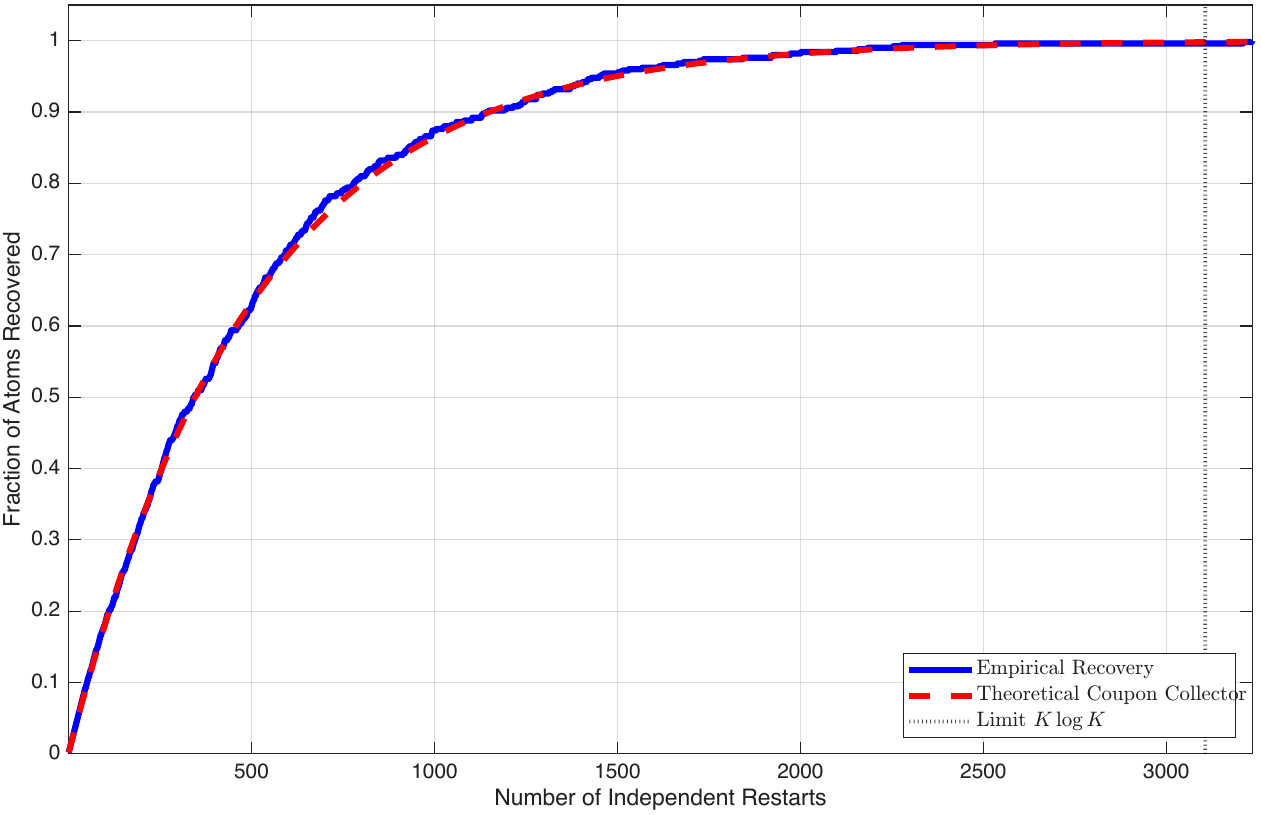} 
    \includegraphics[width=0.4\textwidth]{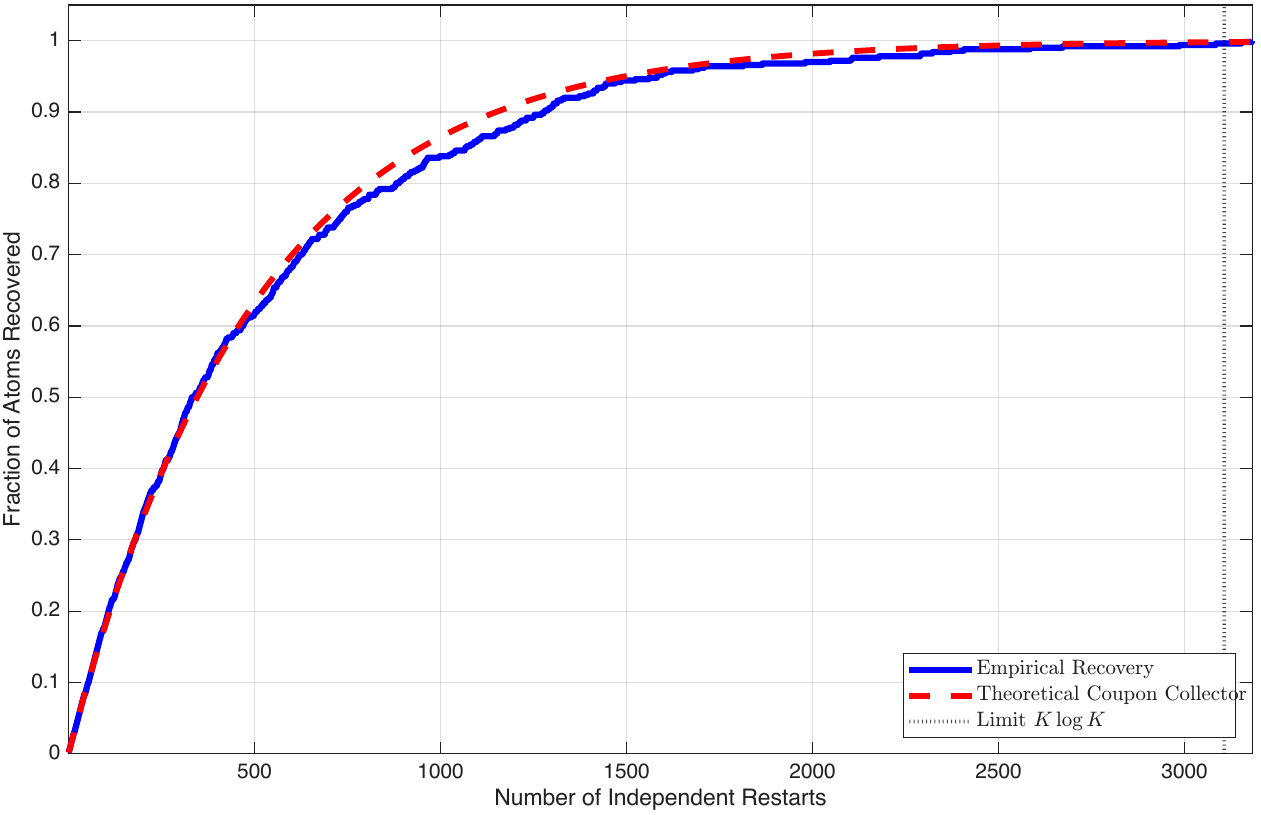}
    \caption{Left: Dictionary coverage curve versus independent restarts $R$  benchmarked against the theoretical Coupon Collector bound $1 - (1 - 1/K)^R$ and transition threshold $K \log K$; $K=500$ $(s=3)$. Right: Same as the left figure but with $0.05\%$ of the full data and $s=4$.
}
    \label{fig:FigureC_CoverageCurve_K100}
\end{figure}

\Cref{fig:tall_unperturbed} evaluates performance in the overdetermined setting  over $20$ independent runs using Gaussian dictionaries with aspect ratio $K/N \in [0.1, 1.0]$. Across all aspect ratios, the algorithm consistently converges in $5$ to $6$ iterations. Although theoretical model assumptions guarantee convergence at step 3, the termination condition $\Vert \mathbf{d}^{(r)} - \mathbf{d}^{(r-1)} \Vert < \epsilon$ requires at least $4$ iterations, with dictionary non-orthogonality adding $1$ to $2$ extra steps. \Cref{fig:tall_perturbed} shows comparable results under  random Gaussian coefficients $x_j=1 + \sigma z_j$, $\sigma = 0.5$, $z_j \sim \mathcal{N}(0,1)$. 

\begin{figure*}[htbp]
    \centering
    \begin{subfigure}[t]{0.31\textwidth}
        \centering
        \includegraphics[width=\textwidth]{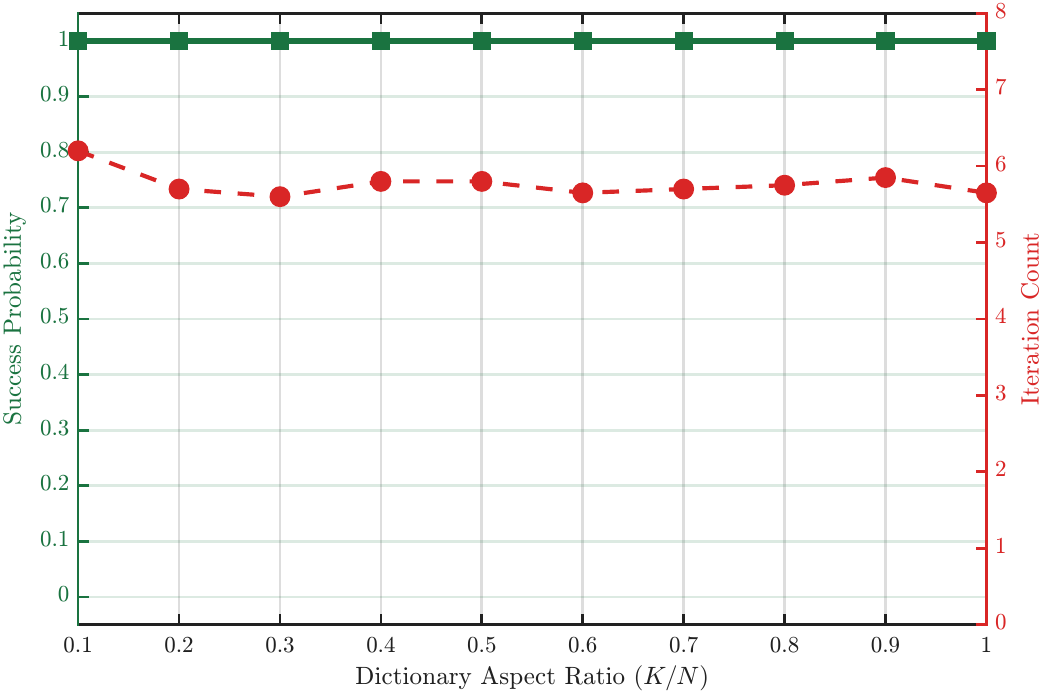}
        \caption{}
        \label{fig:tall_unperturbed}
    \end{subfigure}
    \hfill
    \begin{subfigure}[t]{0.31\textwidth}
        \centering
        \includegraphics[width=\textwidth]{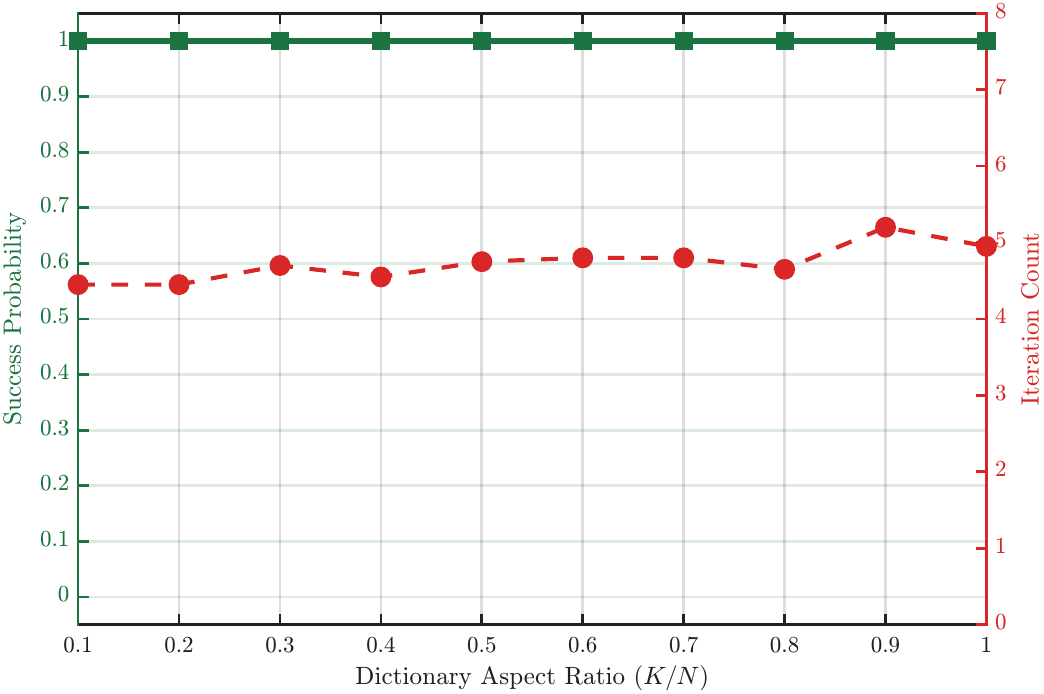}
        \caption{}
        \label{fig:tall_perturbed}
    \end{subfigure}
    \hfill
    \begin{subfigure}[t]{0.31\textwidth}
        \centering
        \includegraphics[width=\textwidth]{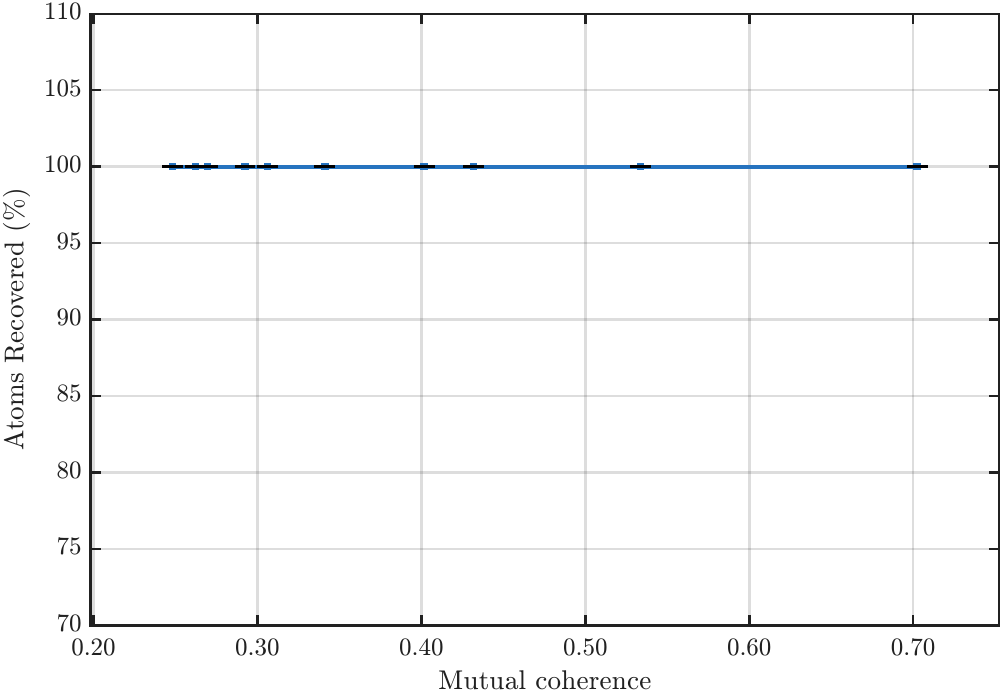} 
        \caption{}
        \label{fig:fat_dictionaries_stat}
    \end{subfigure}
    \caption{Performance metrics for $N \times K$ Gaussian dictionaries. Panels (a) and (b): Tall dictionaries ($N > K$, with $K=500$) evaluated over 20 trials across ratio $K/N \in [0.1, 1.0)$, showing single-atom empirical success probability (solid green, left axis) and average iterations to convergence (dashed red, right axis) under (a) unperturbed ($x_j = 1$) and (b) perturbed ($x_j = 1 + 0.5z_j$, $z_j \sim \mathcal{N}(0,1)$) coefficients. Panel (c): Fat dictionaries ($N < K$), plotting full-dictionary recovery percentage against mutual coherence $\mu(\mathbf{A})$ across 10 realizations.}
    \label{fig:overall_performance}
\end{figure*}

\Cref{fig:fat_dictionaries_stat} demonstrates full dictionary recovery in the underdetermined regime across $10$ independent trials. As the dimension $N$ of a random Gaussian dictionary ($K = 300$) sweeps from $300$ down to $30$, mutual coherence increases from $\mu(\mathbf{A}) = 0.25$ to $0.70$. Across all dimensions, IAR achieves $100\%$ dictionary recovery with a coupon-collector factor of $c = 2.0$. The narrow standard deviation whiskers ($\pm 1\,\text{SD}$) underscore the algorithm's robustness.


Finally, \Cref{fig:alignment_comparison} benchmarks IAR (bottom right) against standard MOD (top left), standard MOD with a single sparse-coding step (top right), and greedy MOD (bottom left). We plot the maximum correlation $\langle {\bf a}_k, {\bf d}_j \rangle$ between each true atom ${\bf a}_k$ and its closest recovered counterpart ${\bf d}_j$. Experiments use  data generated from an orthogonal dictionary ($K = 500$, $s = 5$) under severe sample scarcity ($M = 25000 \ll \binom{K}{s}$). All algorithms are initialized from the same Gaussian random matrix ${\bf D}_{0}$.
As shown in \Cref{fig:standard_mod}, standard MOD fails completely when sparse coding runs to convergence. However, limiting sparse coding to a single step dramatically improves performance, yielding near-perfect recovery across many dictionary atoms (\Cref{fig:greedy_mod_1ini}). Still, relying on a single initialization leaves this variant vulnerable to local minima, leaving $5\%$ to $15\%$ of atoms unrecovered depending on the initialization.
By incorporating five random initializations alongside clustering to identify consistently recovered atoms, greedy MOD eliminates local-minimum failures and achieves perfect $1.0$ correlation across all $500$ true atoms with complete stability (\Cref{fig:greedy_mod_5ini}).
IAR also achieves full recovery, maintaining a uniform correlation near $0.965$ (\Cref{fig:iar}). Because single-atom refinement bypasses matrix inversions, the algorithm runs over an order of magnitude faster. 


\begin{figure*}[htbp]
    \centering
    \begin{subfigure}[t]{0.23\textwidth}
        \centering
        \includegraphics[width=\textwidth]{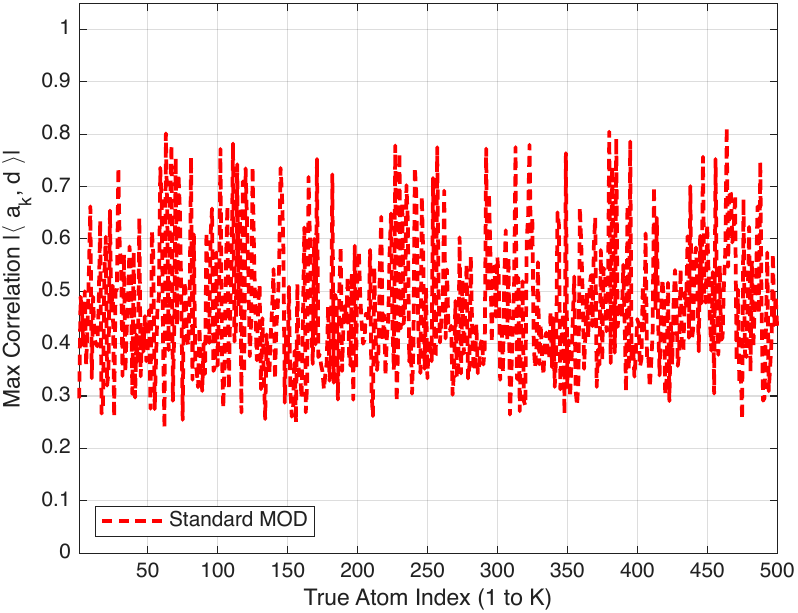}
        \caption{Standard MOD}
        \label{fig:standard_mod}
    \end{subfigure}
    \hfill
    \begin{subfigure}[t]{0.23\textwidth}
        \centering
        \includegraphics[width=\textwidth]{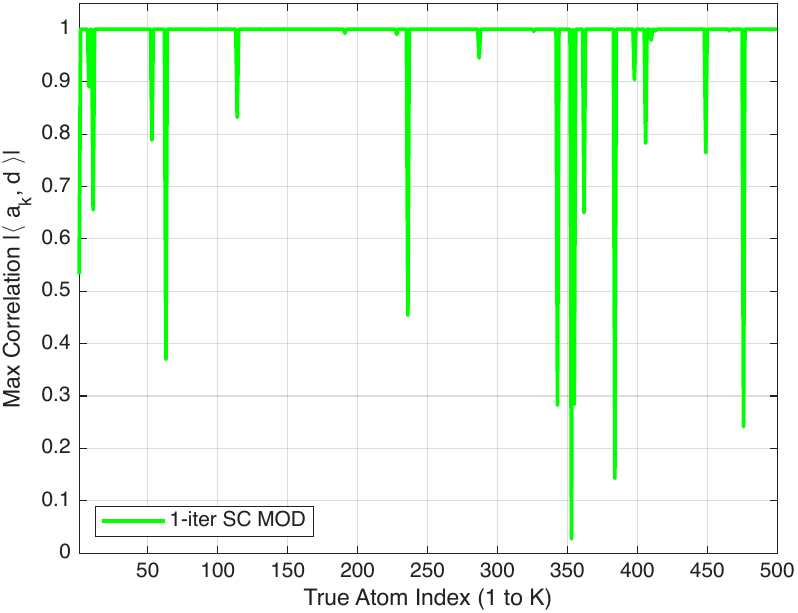}
        \caption{1-iter SC MOD}
        \label{fig:greedy_mod_1ini}
    \end{subfigure}
    \hfill
    \begin{subfigure}[t]{0.23\textwidth}
        \centering
        \includegraphics[width=\textwidth]{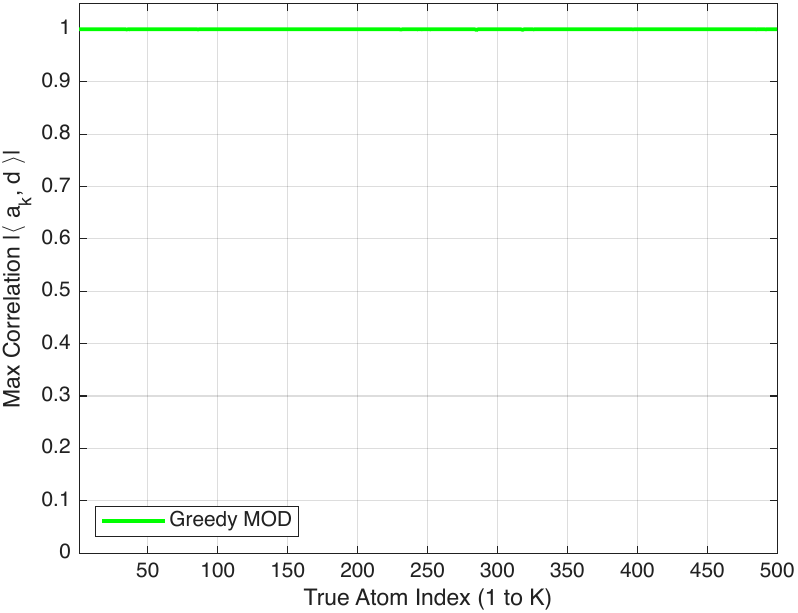}
        \caption{Greedy MOD}
        \label{fig:greedy_mod_5ini}
    \end{subfigure}
    \hfill
    \begin{subfigure}[t]{0.23\textwidth}
        \centering
        \includegraphics[width=\textwidth]{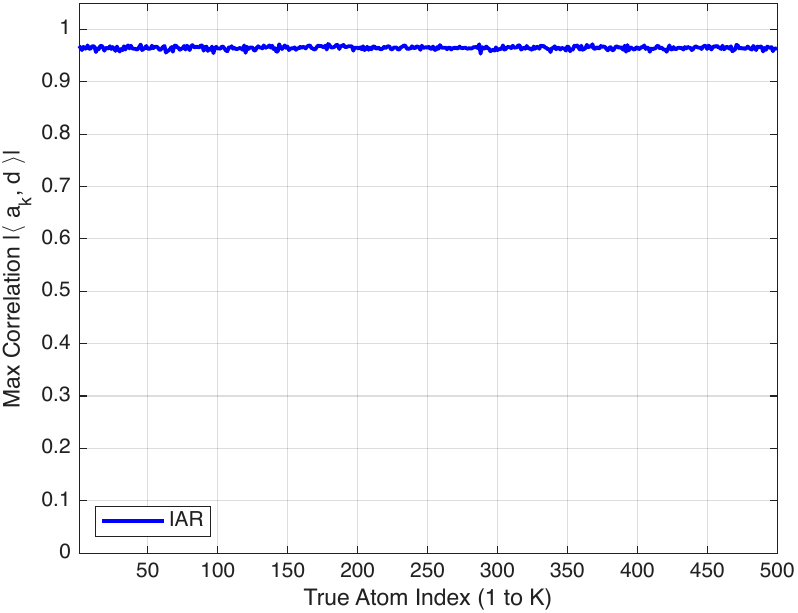}
        \caption{IAR}
        \label{fig:iar}
    \end{subfigure}
    \caption{Atom alignment across recovery methods.} 
    \label{fig:alignment_comparison}
\end{figure*}

\section{Proof of \Cref{thm:single_atom}}\label{iar_thm}

\begin{proof}
Pick a ${\bf d}^{(0)}$ and let ${\bf \tilde d}^{0}= A^T {\bf d}^{(0)}$. Then, by assumption,
\[
\tilde{d}^0_1= \langle {\bf d}^{(0)}, {\bf a}_1 \rangle > \tilde{d}^0_j = \langle {\bf d}^{(0)}, {\bf a}_j \rangle
\quad \text{for all } j>1.
\]
Without loss of generality we can assume that the components of ${\bf \tilde d}^0$ are ordered, so
\[
\tilde{d}^0_1 > \tilde{d}^0_2 > \tilde{d}^0_3 > \cdots > \tilde{d}^0_K.
\]
If ${\bf X}$ is full data, then the proportions $p_k^{(r)}$ are exactly the probabilities described in~\eqref{full_data}. Quantitative separation of the first-step probabilities is given by ~\Cref{lemma4}.

We now claim that
\[
| p_1^{(2)} - 1 | \le \frac{C}{K},
\qquad
p_k^{(2)} \le \frac{C}{K}, \quad k \ge 2,
\]
for a constant $C$ independent of $K$.
Indeed, by \Cref{lemma4}, with high probability there exist
$c\in(0,1)$, $\varepsilon>0$, and $\kappa$ such that
\[
c+(s-1)\varepsilon<1,
\]
and
\[
\tilde d_2^{(1)}\le c\,\tilde d_1^{(1)},
\qquad
\tilde d_i^{(1)}\le \varepsilon\,\tilde d_1^{(1)}
\quad\text{for all }i\ge\kappa.
\]
Let the threshold $\alpha_1$ be chosen, similar to $\alpha_0$, so that
\[
\#\Bigl\{ i : \langle {\bf d}^{(1)}, {\bf y}_i \rangle > \alpha_1 \Bigr\}
= \frac{M}{K}=\frac{1}{K}\binom{K}{s}=\Theta(K^{s-1}).
\]

We claim that the number of ${\bf y}_i$ such that
$\langle {\bf y}_i, {\bf d}^{(1)} \rangle > \alpha_1$
and index $1$ is not in the support of ${\bf x}_i$ is at most
$\kappa^2 K^{s-2}$.
Indeed, there are exactly
$\binom{K-1}{s-1}=sL$ data points ${\bf y}_i$ of the form
\begin{equation}\label{type0}
{\bf y}_i = {\bf a}_1 + \sum_{m \ge 2,\, m \in s_i} {\bf a}_m.
\end{equation}
Data points~\eqref{type0} satisfy
\begin{equation}\label{type1}
\langle {\bf d}^{(1)}, {\bf y}_i \rangle \ge \tilde{d}^{(1)}_1.
\end{equation}
It implies $\alpha_1 \ge \tilde{d}^{(1)}_1$.  So we just need to count how many ${\bf y}_i$ of the form
\begin{equation}\label{type2}
{\bf y}_i := \sum_{m \ge 2,\, m \in s_i} {\bf a}_m
\end{equation}
satisfy~\eqref{type1}.
In order for data sample of the form~\eqref{type2} to satisfy~\eqref{type1} we must have at least two indices of the support $s_i$
to be less than $\kappa$. Indeed assume it contains the next best thing, i.e., the second-best atom ${\bf a}_2$. Even in this case, the total correlation is bounded by
\[
\langle {\bf y}_i, {\bf d}^{(1)} \rangle
\le \tilde{d}_2^{(1)} + \!\!\!\! \sum_{m \ge \kappa,\, m \in s_i} \tilde{d}^{(1)}_m
\le c\,\tilde{d}_1^{(1)}+(s-1)\varepsilon\,\tilde{d}_1^{(1)}
<\tilde{d}_1^{(1)}.
\]
Now, the number of~\eqref{type2} with $m_1,m_2 \in s_i$, $2\leq m_1 \le \kappa$, $2\leq m_2 \le \kappa$ is at most
\[
\frac{s(s-1)}{2} \kappa^2 K^{s-2}.
\]
Hence, since $L=\frac{1}{K}\binom{K}{s}=\Theta(K^{s-1})$,
\[
p_1^{(2)} \ge 1- \frac{\frac{s(s-1)}{2} \kappa^2 K^{s-2}}{L}
\ge 1 - \frac{C}{K}.
\]
The number of~\eqref{type0} with index $k$ in the support is at most the number of data points whose supports contain both indices $1$ and $k$, which is $\binom{K-2}{s-2}=\mathcal{O}(K^{s-2})$. Hence,
\[
p_k^{(2)} \le \frac{\frac{s(s-1)}{2} \kappa^2 K^{s-2}+\binom{K-2}{s-2}}{L} \le \frac{C}{K}.
\]

We now claim that $p_1^{(3)} = 1$. Recall that
${\bf d}^{(2)}=\gamma A{\bf p}^{(2)}$, $\|{\bf d}^{(2)}\|_2=1$,
and
${\bf p}^{(2)} = {\bf e}_1 + {\boldsymbol\delta}$, $\|\boldsymbol\delta\|_\infty \le \frac{C}{K}$,
for some constant $C$. Since $\| {\bf p}^{(2)}\| =1+O(1/\sqrt{K})$ we also have
\[
A^T{\bf d}^{(2)} = {\bf e}_1 + {\bf v},
\qquad\text{with}\qquad
\|{\bf v}\| \le \frac{C_1}{\sqrt{K}}.
\]

Let the threshold $\alpha_2$ be chosen, similar to $\alpha_0$ and $\alpha_1$, so that
\[
\#\Bigl\{ i : \langle {\bf d}^{(2)}, {\bf y}_i \rangle > \alpha_2 \Bigr\}
= \frac{M}{K}=\frac{1}{K}\binom{K}{s}=\Theta(K^{s-1}).
\]

If $1 \in s_i$ then the correlation is close to $1$:
\[
\langle {\bf y}_i, {\bf d}^{(2)} \rangle
= \langle {\bf x}_i, {\bf e}_1 + {\bf v}  \rangle
= 1 + O\!\left(\sqrt{\frac{s}{K}}\right) \to 1, \,\, \text{as} \,\, K\to \infty.
\]
If $1 \notin s_i$ then the correlation is small:
\[
\langle {\bf y}_i, {\bf d}^{(2)} \rangle
= \langle {\bf x}_i, {\bf e}_1 + {\bf v}  \rangle
= O\!\left(\sqrt{\frac{s}{K}}\right) \to 0, \,\, \text{as} \,\, K\to \infty.
\]
The number of ${\bf y}_i$ such that $1 \in s_i$ is more than $M/K$.
Thus, when thresholding at level $\alpha_2$, only ${\bf y}_i$ with $1 \in s_i$ will be selected.
Therefore $p_1^{(3)} = 1$.

The same argument used to pass from the estimates on
\(\mathbf{p}^{(2)}\) to those on \(\mathbf{p}^{(3)}\) shows that the
corresponding estimates on \(\mathbf{p}^{(r-1)}\) imply those on
\(\mathbf{p}^{(r)}\).  The claim therefore follows by induction.
\end{proof}

\section{Proof of the Frequency Separation \Cref{lemma4}}\label{tech_lemma}

\paragraph{Step 1: Approximation of $p_k$.}
Let
\[
X_1,\dots,X_K\stackrel{\mathrm{i.i.d.}}{\sim}\mathcal N(0,1),
\qquad
X_{(1)}\ge\cdots\ge X_{(K)}
\]
be the corresponding order statistics. After relabeling the dictionary
atoms, we may write
\[
\widetilde d_k=X_{(k)}.
\]
Recall that
\[
q_K=\Phi^{-1}\!\left(1-\frac1K\right),
\qquad
\bar\alpha_0=\sqrt{s}\,q_K,
\]
and define
\[
g_K(x)
:=
\bar\Phi\!\left(
\frac{\bar\alpha_0-x}{\sqrt{s-1}}
\right).
\]

We claim that, for every fixed $\kappa$,
\begin{equation}\label{eq:conditional-pk}
p_k
=
g_K(X_{(k)})\bigl(1+o_{\mathbb P}(1)\bigr),
\qquad
k=1,\dots,\kappa+1,
\end{equation}
uniformly over these finitely many indices.

We first compare the random threshold $\alpha_0$ with $\bar\alpha_0$.
For $t\in\mathbb R$, set
\[
F_{s,K}(t)
:=
\frac{1}{\binom Ks}
\sum_{\substack{S\subset\{1,\dots,K\}\\ |S|=s}}
\mathbf 1
\left\{
\sum_{j\in S}X_j\ge t
\right\}.
\]
Then
\[
\mathbb E F_{s,K}(t)
=
\bar\Phi\!\left(\frac{t}{\sqrt{s}}\right).
\]
In particular, for fixed $u\in\mathbb R$,
\[
\mathbb E
F_{s,K}\left(\bar\alpha_0+\frac{u}{q_K}\right)
=
\frac1K
\exp\left\{-\frac{u}{\sqrt{s}}+o(1)\right\}.
\]

Moreover,
\begin{equation}\label{eq:Ustat-concentration}
\frac{
\operatorname{Var}
F_{s,K}\left(\bar\alpha_0+\frac{u}{q_K}\right)
}{
\left(
\mathbb E
F_{s,K}\left(\bar\alpha_0+\frac{u}{q_K}\right)
\right)^2
}
\longrightarrow0,
\end{equation}
uniformly for bounded $u$. Indeed, two $s$-element subsets with
intersection of cardinality $r$ occur with relative frequency
$O(K^{-r})$, while the corresponding normalized Gaussian sums have
correlation $r/s$. Standard bivariate Gaussian tail estimates give
\[
\mathbb P
\left\{
Z_1\ge q_K+O(q_K^{-1}),
\;
Z_2\ge q_K+O(q_K^{-1})
\right\}
=
K^{-2s/(s+r)+o(1)}.
\]
Thus the contribution of pairs with intersection of cardinality $r$ to
the relative variance in~\eqref{eq:Ustat-concentration} is
\[
K^{\,2-r-\frac{2s}{s+r}+o(1)}
=o(1),
\qquad
r=1,\dots,s.
\]

Since the actual threshold $\alpha_0$ satisfies
\[
F_{s,K}(\alpha_0)=\frac1K,
\]
Chebyshev's inequality, applied at
$\bar\alpha_0\pm u/q_K$, gives
\begin{equation}\label{eq:alpha-concentration}
\alpha_0
=
\bar\alpha_0+o_{\mathbb P}(q_K^{-1}).
\end{equation}

We now consider $p_k$. Set
\[
F_{s-1,K}(t)
:=
\frac{1}{\binom K{s-1}}
\sum_{\substack{T\subset\{1,\dots,K\}\\ |T|=s-1}}
\mathbf 1
\left\{
\sum_{j\in T}X_j\ge t
\right\}.
\]
Conditional on the realization of the $X_j$'s, the probability $p_k$ is
obtained by averaging over all $(s-1)$-element subsets not containing the
index corresponding to $X_{(k)}$. Hence
\[
p_k
=
F_{s-1,K}\bigl(\alpha_0-X_{(k)}\bigr)
+
O\left(\frac1K\right).
\]

For every fixed $k$,
\[
X_{(k)}
=
q_K+O_{\mathbb P}(q_K^{-1}),
\]
and therefore
\[
\alpha_0-X_{(k)}
=
(\sqrt{s}-1)q_K+O_{\mathbb P}(q_K^{-1}).
\]
The same second-moment argument as above, now for $(s-1)$-element subsets,
gives
\[
F_{s-1,K}(t)
=
\bar\Phi\!\left(\frac{t}{\sqrt{s-1}}\right)
\bigl(1+o_{\mathbb P}(1)\bigr)
\]
uniformly for
\[
t=(\sqrt{s}-1)q_K+O(q_K^{-1}).
\]
Using~\eqref{eq:alpha-concentration}, we obtain
\[
p_k
=
g_K(X_{(k)})
\bigl(1+o_{\mathbb P}(1)\bigr)
+
O\left(\frac1K\right).
\]
Finally,
\[
g_K(X_{(k)})
=
K^{-c(s)+o_{\mathbb P}(1)},
\qquad
c(s)
=
\frac{(\sqrt{s}-1)^2}{s-1}
=
\frac{\sqrt{s}-1}{\sqrt{s}+1}
<1.
\]
Thus the $O(1/K)$ term is negligible, and
\eqref{eq:conditional-pk} follows.

\paragraph{Step 2: Separation of the leading probabilities.}
Set
\[
p:=\frac{1}{\sqrt{s}+1}.
\]
By Appendix~\ref{app:successive-pk}, \Cref{thm:pj_over_g},
for every fixed $\kappa$,
\[
\left(
\frac{g_K(X_{(2)})}{g_K(X_{(1)})},
\frac{g_K(X_{(\kappa)})}{g_K(X_{(1)})}
\right)
\Longrightarrow
\left(
\left(\frac{S_1}{S_2}\right)^p,
\left(\frac{S_1}{S_\kappa}\right)^p
\right),
\]
where
\[
S_j=E_1+\cdots+E_j
\]
and $E_1,E_2,\dots$ are independent $\operatorname{Exp}(1)$ random
variables. Together with~\eqref{eq:conditional-pk}, this gives
\begin{equation}\label{eq:pk-ratio-limit}
\left(
\frac{p_2}{p_1},
\frac{p_\kappa}{p_1}
\right)
\Longrightarrow
\left(
\left(\frac{S_1}{S_2}\right)^p,
\left(\frac{S_1}{S_\kappa}\right)^p
\right).
\end{equation}

It suffices to consider $0<\eta<1$. Since
\[
\frac{S_1}{S_2}
=
\frac{E_1}{E_1+E_2}
\]
is uniformly distributed on $(0,1)$, we may choose $c\in(0,1)$ so close
to $1$ that
\[
\mathbb P
\left\{
\left(\frac{S_1}{S_2}\right)^p<c
\right\}
>
1-\frac{\eta}{3}.
\]
Choose $\varepsilon>0$ so that
\[
c+(s-1)\varepsilon<1.
\]
Furthermore,
\[
\frac{S_1}{S_\kappa}
\sim\operatorname{Beta}(1,\kappa-1),
\]
and hence
\[
\mathbb P
\left\{
\left(\frac{S_1}{S_\kappa}\right)^p
\ge\varepsilon
\right\}
=
\left(1-\varepsilon^{1/p}\right)^{\kappa-1}.
\]
We may therefore choose $\kappa$ sufficiently large that this probability
is less than $\eta/3$.

It follows from~\eqref{eq:pk-ratio-limit} that, for all sufficiently large
$K$, with probability at least $1-\eta$,
\[
p_2\le c\,p_1,
\qquad
p_\kappa\le\varepsilon p_1.
\]
Since $p_k^{(1)}=s\,p_k$, the same inequalities hold for
$p_k^{(1)}$. This proves \Cref{lemma4}.
\qed

\section{Conclusions}\label{sec:conclusions}

The main contribution of this work is a monotonicity principle for dictionary
learning. We introduce atom-selection frequencies that describe how often
each generating atom appears among the samples selected by the current
iterate. Although these quantities are not observed by the algorithm, they
provide an effective description of its dynamics. In the model considered
here, their ordering is preserved under iteration, while a strict advantage
of one atom is amplified from one step to the next.

This principle turns the recovery problem into the study of a simple
one-dimensional ordering mechanism. Under the orthogonal full-data model,
it implies that a randomly initialized iterate isolates a generating atom
after three refinement steps. Repeating the procedure from independent
random initializations then gives recovery of the entire dictionary at the coupon-collector scale

$$
R\sim K\log K.
$$

The monotonicity principle is not merely a technical feature of the idealized
model. We view it as the main structural mechanism uncovered in this work.
It suggests a different way to analyze dictionary-learning dynamics. Instead
of tracking the full nonconvex iteration, one follows the ordering and
amplification of atom-selection frequencies.

Our numerical experiments indicate that this mechanism persists far beyond
the setting in which it is proved, including incomplete data and
non-orthogonal dictionaries. It may also provide a theoretical mechanism for
the successful recovery from random initialization observed in our earlier
neural-network experiments~\cite{Weak}, although establishing this connection
rigorously remains an open problem.

We therefore expect monotonicity of selection frequencies to be a robust
principle in dictionary learning, rather than an artifact of the present
model. A central question is to determine how broadly this principle can be
established rigorously.

\appendix

\section{Appendix. Asymptotic separation for successive probabilities}\label{app:successive-pk}
The next Theorem provides the exact asymptotic scaling for the selection probabilities in the IAR algorithm, proving precisely how the target atom detaches itself from the background 
noise during the initial thresholding step.

\begin{theorem}\label{thm:pj_over_g}
Fix an integer $s\ge 2$ and an integer $j\ge 1$. Let
$X_1,\dots,X_K$ be i.i.d.\ $N(0,1)$ with decreasing order statistics
\[
X_{(1)}\ge\cdots\ge X_{(K)}.
\]
Let $Y\sim N(0,s-1)$ be independent. Let $\Phi$ denote the CDF of
$N(0,1)$ and $\bar\Phi:=1-\Phi$. Define
\[
q_K:=\Phi^{-1}\!\left(1-\frac1K\right),
\qquad
\alpha_0:=\sqrt{s}\,q_K,
\]
and
\[
g(x):=\bar\Phi\!\left(\frac{\alpha_0-x}{\sqrt{s-1}}\right).
\]
Set
\[
p=\frac{1}{\sqrt{s}+1}.
\]
Then, for every fixed $j$,
\[
\left(
\frac{g(X_{(1)})}{g(q_K)},
\dots,
\frac{g(X_{(j)})}{g(q_K)}
\right)
\Longrightarrow
\left(
S_1^{-p},
\dots,
S_j^{-p}
\right),
\]
where
\[
S_k=E_1+\cdots+E_k
\]
and $E_1,E_2,\dots$ are i.i.d.\ $\operatorname{Exp}(1)$ random variables.

Moreover, if
\[
\widetilde p_j
:=
\PP\bigl(Y+X_{(j)}\ge\alpha_0\bigr),
\]
then
\[
\frac{\widetilde p_j}{g(q_K)}
\longrightarrow
\frac{\Gamma(j-p)}{\Gamma(j)}.
\]
\end{theorem}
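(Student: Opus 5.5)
We need to prove Theorem \ref{thm:pj_over_g}. The argument combines two ingredients: extreme-value theory for the top order statistics of a Gaussian sample, and a local linearization of $\log g$ near $q_K$.

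\textbf{Step 1: extreme-value limit.} We use the classical point-process limit for the top order statistics. Let $a_K:=q_K$ and $b_K:=1/q_K$; since $\bar\Phi(q_K)=1/K$ and $\bar\Phi(q_K+u/q_K)/\bar\Phi(q_K)\to e^{-u}$, we have
\[
\bigl(q_K(X_{(1)}-q_K),\dots,q_K(X_{(j)}-q_K)\bigr)\Longrightarrow(-\log S_1,\dots,-\log S_j).
\]
To see this, write $K\bar\Phi(X_{(k)})$ as the $k$-th smallest of $K$ uniforms rescaled by $K$. These converge jointly to $(S_1,\dots,S_k)$ by the Rényi representation, and then $K\bar\Phi(q_K+u/q_K)\to e^{-u}$ locally uniformly in $u$ inverts the map. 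A tightness remark shows that $X_{(k)}=q_K+O_{\mathbb P}(q_K^{-1})$.

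\textbf{Step 2: linearize $\log g$ on the scale $q_K^{-1}$.} Set $t(x):=(\alpha_0-x)/\sqrt{s-1}$, so that $t(q_K)=\frac{\sqrt s-1}{\sqrt{s-1}}q_K\to\infty$. Mills' ratio gives $\frac{d}{dt}\log\bar\Phi(t)=-t(1+o(1))$. Hence, for $x=q_K+u/q_K$ with $u$ bounded,
\[
\log\frac{g(x)}{g(q_K)}=\frac{t(q_K)}{\sqrt{s-1}}\cdot\frac{u}{q_K}\,(1+o(1))
=\frac{\sqrt s-1}{s-1}\,u+o(1)=p\,u+o(1),
\]
uniformly for $u$ in compacts. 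The second-order term is $O(q_K^{-2})\cdot q_K^{0}$ times bounded quantities, so it is $o(1)$. Combining with Step 1 and the continuous mapping theorem (the convergence is uniform on compacts, and tightness handles the rest), we get $g(X_{(k)})/g(q_K)\to e^{-p\log S_k}=S_k^{-p}$ jointly. This proves the first claim.

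\textbf{Step 3: the expectation $\widetilde p_j$.} Here $\widetilde p_j=\mathbb E\,g(X_{(j)})$, and Step 2 gives distributional convergence of $g(X_{(j)})/g(q_K)$ to $S_j^{-p}$. Since $S_j\sim\mathrm{Gamma}(j,1)$,
\[
\mathbb E S_j^{-p}=\frac{\Gamma(j-p)}{\Gamma(j)},
\]
which is finite because $p<1/2<j$. It remains to upgrade distributional convergence to convergence of means. This uniform integrability is the main obstacle.

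To obtain it, we bound $\mathbb E[(g(X_{(j)})/g(q_K))^{1+\delta}]$ uniformly in $K$ by splitting according to the size of $X_{(j)}$. On $\{X_{(j)}\le q_K+C\log q_K/q_K\}$, say, the Mills-ratio estimate gives $g(x)/g(q_K)\le \exp(p\,q_K(x-q_K)(1+o(1)))$ for $x$ up to $q_K+o(q_K)$. Meanwhile the tail $\mathbb P(q_K(X_{(j)}-q_K)>u)\lesssim e^{-ju}$ for the top order statistic family. The exponent $p(1+\delta)<j$ then makes the moment finite. For very large $x$ (beyond $(1+\epsilon)q_K$) we use the trivial bound $g\le1$ together with $g(q_K)=K^{-c(s)+o(1)}$, where $c(s)<1$. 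Against this we set $\mathbb P(X_{(1)}>(1+\epsilon)q_K)\le K^{-(2\epsilon+\epsilon^2)+o(1)}\cdot$, which is handled by choosing the split so that the Gaussian-exponent comparison $(1+\epsilon)^2-1$ versus $c(s)$ works, using the exact quadratic form of $\log g$ there. Below $q_K$ the ratio is at most $1$ times $e^{-p|u|}$-type decay, so this region is harmless. With uniform integrability established, $\widetilde p_j/g(q_K)\to\Gamma(j-p)/\Gamma(j)$.
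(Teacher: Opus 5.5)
Your proposal is correct and follows essentially the same route as the paper: the Gaussian extreme-value limit for $q_K(X_{(k)}-q_K)$, then the local expansion $\log\bigl(g(q_K+u/q_K)/g(q_K)\bigr)=pu+o(1)$ on compacts with continuous mapping, then a three-region domination to pass to expectations (ratio at most $1$ below $q_K$, a $Ce^{pu}$ bound against an exponential tail in the moderate range, and $g(q_K)^{-1}=K^{c(s)+o(1)}$ against a polynomially small probability in the far range); your $(1+\delta)$-moment form of uniform integrability, the sharper tail $e^{-ju}$, and the R\'enyi sketch of the extreme-value limit are only cosmetic variations. The one point to state cleanly is that the moderate-range bound $g(x)/g(q_K)\le C\exp\bigl(p\,q_K(x-q_K)\bigr)$ must hold on all of $q_K<x\le(1+\epsilon)q_K$, not just for $x\le q_K+o(q_K)$ as your sketch says; it does hold there for any fixed $\epsilon<\sqrt s-1$, by two-sided Mills bounds exactly as in the paper's majorant $G_K$, and such an $\epsilon$ can be chosen with $(1+\epsilon)^2-1>(1+\delta)c(s)$ because $c(s)<s-1$.
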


\begin{proof} 

For the entire proof the constants $j$, $s$, and therefore $p$ are fixed.
Conditioning on $X_{(j)}$ gives
\[
\widetilde p_j
=
\mathbb{P}\left(Y+X_{(j)}\ge\alpha_0\right)
=
\EE[g(X_{(j)})].
\]
Define, for $u\in\mathbb{R}$,
\[
r_K(u):=
\frac{g\!\left(q_K+\frac{u}{q_K}\right)}{g(q_K)},
\]
and, for $k=1,\dots,j$,
\[
U_{k,K}:=q_K\bigl(X_{(k)}-q_K\bigr).
\]
Then
\[
\frac{\widetilde p_j}{g(q_K)}
=
\EE\big[r_K(U_{j,K})\big].
\]

We will use the standard joint extreme-value limit for Gaussian order
statistics; see, for example,
\cite[Chs.~3--5]{Resnick1987} or
\cite[Chs.~1--2]{Leadbetter1983}:
\[
\bigl(U_{1,K},\dots,U_{j,K}\bigr)
\Rightarrow
\bigl(U_1,\dots,U_j\bigr),
\qquad
U_k=-\log S_k,
\qquad
S_k=E_1+\cdots+E_k,
\]
where $E_1,E_2,\dots$ are i.i.d.\ $\mathrm{Exp}(1)$ random variables.

\medskip
\noindent\textbf{Step 1 (compact convergence and the bounding envelope \(r_K\le G_K\)).}
Fix $\Lambda>0$ to split the analysis within the central region
$|u|\le\Lambda$ and outside it. Since $\alpha_0=\sqrt{s}\,q_K$, we have
\[
\frac{\alpha_0-q_K}{\sqrt{s-1}}
=
\frac{(\sqrt{s}-1)q_K}{\sqrt{s-1}}
\]
and
\[
\frac{\alpha_0-q_K}{\sqrt{s-1}}
-
\frac{u}{\sqrt{s-1}\,q_K}
=
\frac{(\sqrt{s}-1)q_K-\frac{u}{q_K}}{\sqrt{s-1}}.
\]
Hence for $|u|\le\Lambda$,
\[
\log r_K(u)
=
\log\bar\Phi\!\left(
\frac{(\sqrt{s}-1)q_K-\frac{u}{q_K}}{\sqrt{s-1}}
\right)
-
\log\bar\Phi\!\left(
\frac{(\sqrt{s}-1)q_K}{\sqrt{s-1}}
\right).
\]
Using the logarithmic Mills expansion
\[
\log\bar\Phi(z)
=
-\frac{z^2}{2}-\log z-\frac12\log(2\pi)+O(z^{-2})
\qquad (z\to\infty),
\]
applied to the two arguments
\[
\frac{(\sqrt{s}-1)q_K}{\sqrt{s-1}}
\quad\text{and}\quad
\frac{(\sqrt{s}-1)q_K-\frac{u}{q_K}}{\sqrt{s-1}},
\]
we obtain
\begin{align}
\log r_K(u)
&=
-\frac{1}{2}\left[
\left(
\frac{(\sqrt{s}-1)q_K-\frac{u}{q_K}}{\sqrt{s-1}}
\right)^2
-
\left(
\frac{(\sqrt{s}-1)q_K}{\sqrt{s-1}}
\right)^2
\right]
\notag\\
&\quad
-\log\!\left(
1-\frac{u}{(\sqrt{s}-1)q_K^2}
\right)
+O(q_K^{-2})
=
pu+o(1),
\qquad |u|\le\Lambda .
\label{eq:local_log_rK}
\end{align}
Exponentiating gives
\begin{equation}\label{eq:local_rK}
\sup_{|u|\le\Lambda}
\bigl|r_K(u)-e^{pu}\bigr|
\to0.
\end{equation}

Since
\[
\bigl(U_{1,K},\dots,U_{j,K}\bigr)
\Rightarrow
\bigl(U_1,\dots,U_j\bigr)
\]
and this sequence is tight, the uniform convergence
in~\eqref{eq:local_rK} gives
\[
\bigl(
r_K(U_{1,K}),\dots,r_K(U_{j,K})
\bigr)
\Rightarrow
\bigl(
e^{pU_1},\dots,e^{pU_j}
\bigr)
=
\bigl(
S_1^{-p},\dots,S_j^{-p}
\bigr).
\]
Since
\[
r_K(U_{k,K})
=
\frac{g(X_{(k)})}{g(q_K)},
\]
this proves the first assertion of the theorem.

It remains to justify the passage to expectations in the $j$-th component.

Next define the piecewise global majorant $G_K(u)$ over three distinct regions of the real line
\[
G_K(u):=
\begin{cases}
1, & u\le 0,\\[1mm]
C e^{pu}, & 0<u\le \dfrac{\sqrt{s}-1}{2}\,q_K^2,\\[2mm]
g(q_K)^{-1}, & u> \dfrac{\sqrt{s}-1}{2}\,q_K^2.
\end{cases}
\]
We claim that for all large \(K\), 
\begin{equation}\label{eq:rK_le_GK}
r_K(u)\le G_K(u)\qquad (u\in\mathbb{R}).
\end{equation}

Indeed, since \(g\) is increasing,
\[
r_K(u)\le 1\qquad (u\le 0).
\]
Also,  the same comparison of the two Gaussian tails, using only upper and lower Mills bounds, gives
\[
r_K(u)\le C e^{pu}
\qquad\text{for }0<u\le \frac{\sqrt{s}-1}{2}\,q_K^2.
\]
Finally, if \(u>\frac{\sqrt{s}-1}{2}\,q_K^2\), then \(r_K(u)\le g(q_K)^{-1}\). Write
\[
c:=\frac{\sqrt{s}-1}{\sqrt{s-1}}\in(0,1),
\qquad
g(q_K)=\bar\Phi(cq_K).
\]
Since \(\bar\Phi(q_K)=1/K\),
\[
\frac{1}{K\,g(q_K)}
=
\frac{\bar\Phi(q_K)}{\bar\Phi(cq_K)}\to 0
\qquad (K\to\infty)
\]
by Mills' ratio, because \(c<1\). Hence \(g(q_K)^{-1}\le K\) for all large \(K\), and \eqref{eq:rK_le_GK} follows.

\medskip
\noindent\textbf{Step 2 (upper tail).} 
For \(u\ge 0\),
\[
\{U_{j,K}>u\}\subseteq \{U_{1,K}>u\}
=\bigcup_{i=1}^K \left\{X_i>q_K+\frac{u}{q_K}\right\}.
\]
Therefore
\[
\PP(U_{j,K}>u)\le K\,\bar\Phi\!\left(q_K+\frac{u}{q_K}\right).
\]
Since \(\bar\Phi(q_K)=1/K\),
\[
K\,\bar\Phi\!\left(q_K+\frac{u}{q_K}\right)
=
\frac{\bar\Phi(q_K+u/q_K)}{\bar\Phi(q_K)}.
\]
Moreover,
\[
\log\frac{\bar\Phi(q_K+u/q_K)}{\bar\Phi(q_K)}
=
-\int_{q_K}^{q_K+u/q_K}\frac{\phi(t)}{\bar\Phi(t)}\,dt
\le
-\int_{q_K}^{q_K+u/q_K} t\,dt
\le -u,
\]
where  standard normal density $\phi(t)$ is the derivative of the Gaussian survival function  $\bar\Phi(t)$. Here, we used \(\phi(t)/\bar\Phi(t)\ge t\) for \(t>0\). Hence
\begin{equation}\label{eq:upper_tail_basic}
\PP(U_{j,K}>u)\le \frac{\bar\Phi(q_K+u/q_K)}{\bar\Phi(q_K)} \le e^{-u}\qquad (u\ge 0).
\end{equation}

Fix \(\Lambda>0\). By \eqref{eq:rK_le_GK},
\begin{align*}
&\EE\!\left[r_K(U_{j,K})\mathbf 1_{\{U_{j,K}>\Lambda\}}\right]
\le
\EE\!\left[G_K(U_{j,K})\mathbf 1_{\{U_{j,K}>\Lambda\}}\right] \\
&\le
C\,\EE\!\left[e^{pU_{j,K}}\mathbf 1_{\{\Lambda<U_{j,K}\le \frac{\sqrt{s}-1}{2}q_K^2\}}\right]
+
g(q_K)^{-1}\PP\!\left(U_{j,K}>
\frac{\sqrt{s}-1}{2}q_K^2\right).
\end{align*}
For the first term, using \eqref{eq:upper_tail_basic},
\begin{align*}
\EE\!\left[e^{pU_{j,K}}\mathbf 1_{\{U_{j,K}>\Lambda\}}\right]
&=
e^{p\Lambda}\PP(U_{j,K}>\Lambda)
+
p\int_\Lambda^\infty e^{pu}\PP(U_{j,K}>u)\,du \\
&\le
e^{p\Lambda}e^{-\Lambda}
+
p\int_\Lambda^\infty e^{pu}e^{-u}\,du = \frac{1}{1-p}e^{-(1-p)\Lambda}.
\end{align*}
Since \(p<1\), the right-hand side tends to \(0\) as \(\Lambda\to\infty\), uniformly in \(K\).

For the second term, since
\[
g(q_K)=\bar\Phi(cq_K),
\qquad
c=\frac{\sqrt{s}-1}{\sqrt{s-1}},
\]
Mills' ratio gives
\[
g(q_K)^{-1}=K^{c^2+o(1)}.
\]
Therefore,
\[
g(q_K)^{-1}
\PP\!\left(U_{j,K}>
\frac{\sqrt{s}-1}{2}q_K^2\right)
\le
K^{c^2-(\sqrt{s}-1)+o(1)}
\to0,
\]
since
\[
c^2=\frac{\sqrt{s}-1}{\sqrt{s}+1}<\sqrt{s}-1.
\]
 Thus
\begin{equation}\label{eq:upper_tail_vanish}
\lim_{\Lambda\to\infty}\ \limsup_{K\to\infty}
\EE\!\left[r_K(U_{j,K})\mathbf 1_{\{U_{j,K}>\Lambda\}}\right]=0.
\end{equation}

\medskip
\noindent\textbf{Step 3 (lower tail).}
Because \(g\) is increasing,
\[
r_K(u)\le 1\qquad (u\le 0).
\]
Hence
\[
\EE\!\left[r_K(U_{j,K})\mathbf 1_{\{U_{j,K}<-\Lambda\}}\right]
\le
\PP(U_{j,K}\le -\Lambda).
\]
Since \(U_{j,K}\Rightarrow U_j\), Portmanteau yields
\[
\limsup_{K\to\infty}\PP(U_{j,K}\le -\Lambda)
\le
\PP(U_j\le -\Lambda).
\]
Now
\[
U_j=-\log S_j,
\qquad S_j=E_1+\cdots+E_j,
\]
so
\[
\PP(U_j\le -\Lambda)=\PP(S_j\ge e^\Lambda)\to 0
\qquad  \text{ as } \Lambda\to\infty.
\]
Therefore
\begin{equation}\label{eq:lower_tail_vanish}
\lim_{\Lambda\to\infty}\ \limsup_{K\to\infty}
\EE\!\left[r_K(U_{j,K})\mathbf 1_{\{U_{j,K}<-\Lambda\}}\right]=0.
\end{equation}

\medskip
\noindent\textbf{Step 4 (evaluate the limit constant).}
Fix \(\Lambda>0\). By \eqref{eq:local_rK},
\[
\left|
\EE\!\left[r_K(U_{j,K})\mathbf 1_{\{|U_{j,K}|\le \Lambda\}}\right]
-
\EE\!\left[e^{pU_{j,K}}\mathbf 1_{\{|U_{j,K}|\le \Lambda\}}\right]
\right|
\le
\sup_{|u|\le \Lambda}\bigl|r_K(u)-e^{pu}\bigr|
\to 0.
\]
Since \(U_{j,K}\Rightarrow U_j\) and \(U_j\) has a continuous law, the bounded function
\(u\mapsto e^{pu}\mathbf 1_{\{|u|\le \Lambda\}}\) is \(U_j\)-a.s.\ continuous, and therefore
\[
\EE\!\left[e^{pU_{j,K}}\mathbf 1_{\{|U_{j,K}|\le \Lambda\}}\right]
\to
\EE\!\left[e^{pU_j}\mathbf 1_{\{|U_j|\le \Lambda\}}\right].
\]
Together with \eqref{eq:upper_tail_vanish} and \eqref{eq:lower_tail_vanish}, this yields
\[
\EE[r_K(U_{j,K})]\to \EE[e^{pU_j}]\quad \text{as} \quad K \to\infty.
\]

Since \(U_j=-\log S_j\), we have \(e^{pU_j}=S_j^{-p}\). With \(S_j\sim\mathrm{Gamma}(j,1)\),
\[
\EE[S_j^{-p}]
=
\frac{1}{\Gamma(j)}\int_0^\infty x^{j-p-1}e^{-x}\,dx
=
\frac{\Gamma(j-p)}{\Gamma(j)}
\]
Therefore
\[
\frac{\widetilde p_j}{g(q_K)}=\EE[r_K(U_{j,K})]\longrightarrow \frac{\Gamma(j-p)}{\Gamma(j)} \quad \text{as} \quad K \to\infty.
\]
\end{proof}

\section*{Acknowledgments}
The work of A. Christie and G. Papanicolaou was partially supported by AFOSR FA9550-23-1-0352.
The work of M. Moscoso was supported by  the Spanish AEI grant PID2020-115088RB-I00.
The work of A. Novikov was partially supported by NSF 2407046, AFOSR FA9550-23-1-0352, and FA9550-23-1-0523. 
The work of  C. Tsogka was partially supported by AFOSR FA9550-23-1-0352 and FA9550-24-1-0191. 

\bibliographystyle{siamplain}
\bibliography{references}

\end{document}